\def\eqref#1{equation~\ref{#1}}
\def\1{\bm{1}}
\DeclareMathAlphabet{\mathsfit}{\encodingdefault}{\sfdefault}{m}{sl}
\SetMathAlphabet{\mathsfit}{bold}{\encodingdefault}{\sfdefault}{bx}{n}
\newcommand{\E}{\mathbb{E}}
\newcommand{\R}{\mathbb{R}}
\DeclareMathOperator*{\argmax}{arg\,max}
\DeclareMathOperator*{\argmin}{arg\,min}
\renewcommand{\H}{\mathbb{H}}
\newcommand{\red}[1]{\textcolor{red}{#1}}
\newcommand{\blue}[1]{\textcolor{blue}{#1}}
\newcommand{\Dkl}{\mathrm{D}_\mathrm{KL}}
\newcommand{\Df}{\mathrm{D}_f}
\newcommand{\wt}{\widetilde}
\renewcommand{\eqref}[1]{(\ref{#1})}
\theoremstyle{plain}
\newtheorem{theorem}{Theorem}[section]
\newtheorem{lemma}[theorem]{Lemma}
\theoremstyle{definition}
\theoremstyle{remark}
\newtheorem{remark}[theorem]{Remark}
\title{SEMDICE: Off-policy State Entropy Maximization via Stationary Distribution Correction Estimation}
\author{
Jongmin Lee\thanks{Equal contribution}~~$^{,1,2}$ 
\hspace{0.3em}
Meiqi Sun$^{*,1}$
\hspace{0.3em}
Pieter Abbeel$^{1}$ \\
~\textsuperscript{\rm 1}UC Berkeley~
\textsuperscript{\rm 2}Yonsei University
}
\begin{document}

\maketitle

\begin{abstract}
In the unsupervised pre-training for reinforcement learning, the agent aims to learn a prior policy for downstream tasks without relying on task-specific reward functions. We focus on state entropy maximization (SEM), where the goal is to learn a policy that maximizes the entropy of the state's stationary distribution. In this paper, we introduce SEMDICE, a principled off-policy algorithm that computes an SEM policy from an arbitrary off-policy dataset, which optimizes the policy directly within the space of stationary distributions. SEMDICE computes a single, stationary Markov state-entropy-maximizing policy from an arbitrary off-policy dataset. Experimental results demonstrate that SEMDICE outperforms baseline algorithms in maximizing state entropy while achieving the best adaptation efficiency for downstream tasks among SEM-based unsupervised RL pre-training methods.
\end{abstract}

\section{Introduction}

The essence of intelligent agents lies in their ability to learn from experiences.
Reinforcement learning (RL)~\citep{sutton1998rlbook} provides a framework for autonomously acquiring an intelligent behavior by interacting with the environment while receiving reward signals,
and it has shown great promise in various domains such as complex games~\citep{mnih2015humanlevel,silver2017mastering} and robotic control~\citep{lillicrap2015,haarnoja2018soft}.
Still, standard RL algorithms learn \emph{tabula rasa}, i.e. learning from scratch for every task to maximize extrinsic rewards without using previously learned knowledge. Consequently, the learned RL policy tends to be brittle and lacks generalization capabilities, limiting its widespread adoption to many real-world sequential decision-making problems~\citep{cobbe2020leveraging,Gleave2020Adversarial}.

In contrast, in the fields of computer vision~\citep{he2020moco,chen2020simclr} and natural language processing~\citep{devlin2019bert,brown2020gpt3}, 
large-scale unsupervised pre-training has paved the way for sample-efficient few-shot adaptation.
During unsupervised pre-training, the models are trained using a large unlabelled dataset, and then the pre-trained models are fine-tuned later for each specific downstream task with a handful of task-specific labeled datasets. 
We consider the analogy setting of the unsupervised pre-training for RL~\citep{laskin2021urlb}. 
Specifically, during unsupervised RL pre-training, the agent is allowed to train over long periods without access to the extrinsic rewards of the environment. 
This procedure yields a pre-trained policy snapshot, aiming for data-efficient adaptation in subsequent downstream tasks defined by reward functions that were inaccessible a priori.

We consider the State-Entropy Maximization (SEM) approach for RL policy pre-training~\citep{hazan2019provably,liu2021behavior,yarats2021protorl}, as it is simple yet provides robust policy initialization for efficient adaptation against the worst-case reward function~\citep{eysenbach2022information}. 
However, despite its conceptual simplicity and widespread popularity, a principled approach to sample-efficient \emph{off-policy} SEM methods remains unexplored. Existing methods are 
either on-policy (thus sample-inefficient)~\citep{hazan2019provably}, or off-policy but biased~\citep{liu2021behavior,yarats2021protorl}.
They construct the intrinsic reward function based on the particle-based entropy estimator~\citep{Singh2003NearestNE}, and then optimize the policy in the direction of maximizing the constructed intrinsic rewards.
For off-policy learning algorithms, state particles from the replay buffer are naively used to estimate state entropy~\citep{liu2021behavior,yarats2021protorl}, resulting in estimates of the entropy of the state data stored in the replay buffer, rather than the state entropy of the target policy’s state stationary distribution.
Consequently, it remains unclear whether these off-policy methods can converge to an optimal SEM policy.
One can consider importance sampling to correct the off-policyness~\citep{mutti2021task}, but it would suffer from high variance issue due to the curse of horizon~\citep{liu2018breaking}, significantly limiting the degree of sample reuse.
Consequently, none of the existing methods serves as an unbiased, sample-efficient SEM method.


In this paper, we present a state-entropy maximization method that precisely addresses the aforementioned issues of the existing methods. Our method, \emph{\textbf{S}tate-\textbf{E}ntropy-\textbf{M}aximization via stationary \textbf{DI}stribution \textbf{C}orrection \textbf{E}stimation (SEMDICE)}, essentially optimizes in the space of stationary distributions, rather than in the space of policies or Q-functions, and it leverages arbitrary off-policy samples. We show that SEMDICE only requires solving a single convex minimization problem, and thus it can be optimized stably.
To the best of our knowledge, SEMDICE is the first principled and practical algorithm that computes a SEM policy from \textbf{arbitrary off-policy dataset}.
In the experiments, we demonstrate that SEMDICE converges to an optimal SEM policy in the tabular MDP experiments.
Also, regarding RL policy pre-training, SEMDICE adapts to downstream tasks more efficiently than eixsting data-based (i.e., SEM-based) unsupervised RL methods~\citep{liu2021behavior,yarats2021protorl}.

\section{Preliminaries}

\subsection{Markov Decision Process (MDP)}
We assume the environment modeled as an infinite-horizon\footnote{We consider infinite-horizon MDPs in the paper for simplicity, but our formulation can be extended to finite-horizon MDPs (Appendix~\ref{appendix:finite_horizon_setting}).} Markov Decision Process (MDP) $M = \langle S, A, T, r, \gamma, p_0 \rangle$, where $S$ is the set of states $s$, $A$ is the set of actions $a$, $T: S \times A \rightarrow \Delta(S)$ is the transition probability, $r: S \rightarrow \R$ is the reward function, $\gamma \in [0, 1]$ is the discount factor, $p_0 \in \Delta(S)$ is the initial state distribution. The policy $\pi: S \rightarrow \Delta(A)$ is a mapping from state distribution over actions\footnote{We call $\pi$ Makovian policy if $\pi$ depends only on the last state and call it stationary policy if $\pi$ does not depend on timestep.}.
For a given policy, its \emph{state} stationary distribution $\bar d^\pi(s)$ and \emph{state-action} stationary distribution $d^\pi(s,a)$ are defined as follows:
\begin{align}
    \bar d^\pi(s) &:= 
    \begin{cases}
        (1 - \gamma) \sum\limits_{t=0}^\infty \gamma^t \Pr(s_t = s) & \hspace{-2pt}\text{if } \gamma < 1, \\
        \lim\limits_{T \rightarrow \infty} \frac{1}{T + 1} \sum\limits_{t=0}^T \Pr(s_t = s) & \hspace{-2pt}\text{if } \gamma = 1,
    \end{cases} \\
    d^\pi(s,a) &:= \begin{cases}
        (1 - \gamma) \sum\limits_{t=0}^\infty \gamma^t \Pr(s_t = s, a_t = a) & \hspace{-2pt}\text{if } \gamma < 1, \\
        \lim\limits_{T \rightarrow \infty} \frac{1}{T + 1} \sum\limits_{t=0}^T \Pr(s_t = s, a_t = a) & \hspace{-2pt}\text{if } \gamma = 1,
    \end{cases}
\end{align}
where $s_0 \sim p_0$, $a_t \sim \pi(s)$, and $s_{t+1} \sim T(s_t, a_t)$ for each timestep $t \ge 0$. $d^\pi$ and $\bar d^\pi$ can be understood as normalized (discounted) occupancy measures of $(s,a)$ and $s$ respectively.
For brevity, we focus on discounted MDPs $(\gamma < 1)$ in the main text, but our formulation can be easily generalized to undiscounted ($\gamma=1$) settings (see Appendix~\ref{appendix:undiscounted}).
We will use the bar notation $(\bar \cdot)$ to denote the distributions for state $s$, e.g., $\bar d^\pi(s)$. 
The goal of standard RL is to learn a policy that maximizes the expected rewards by interacting with the environment: $\max_{\pi} (1 - \gamma) \E_{\pi}[ \sum_{t=0}^\infty \gamma^t r(s_t) ] = \E_{s \sim \bar d^\pi}[ r(s) ] = \langle \bar d^\pi, r \rangle$.

\subsection{Unsupervised RL via State Entropy Maximization (SEM)}
In the unsupervised pre-training of RL~\citep{laskin2021urlb}, the agent is trained by interacting with a reward-free MDP $\langle S, A, T, \gamma, p_0 \rangle$ over long periods, where the goal is to learn a policy that can quickly adapt to downstream tasks defined by reward functions that are unknown a priori.
In this work, we are particularly interested in the approach of maximizing the entropy of state stationary distribution~\citep{jain2023maximum,mutti2021task,mutti2022importance,liu2021behavior,yarats2021protorl} as an objective for unsupervised pre-training RL:
\begin{align}
    \pi^* := \argmax_{\pi} \H[\bar d^\pi(s)] = -\sum_{s} \bar d^\pi(s) \log \bar d^\pi(s)
    \label{eq:policy_entropy_maximization}
\end{align}
In other words, we aim to pre-train a policy whose state visitations cover the entire state space equally well.
Intuitively, having such an SEM policy would allow agents to efficiently receive reward signals even for arbitrarily sparse reward functions, facilitating fast adaptation for downstream tasks. More discussions on why learning an SEM policy can be useful for RL pre-training can be found in Appendix~\ref{appendix:why_sem}.


\subsection{Existing SEM Methods for RL Pre-training}
Existing methods for SEM~\citep{lee2020smm,mutti2021task,liu2021behavior,yarats2021protorl} commonly follow the following procedures: (1) construct intrinsic reward functions $\hat r(s) \approx - \log \bar d^\pi(s)$, e.g., by particle-based state-entropy estimation: $\hat r(s_i) \approx - \log \bar d^\pi(s_i) \approx \log \big( \lVert s_i - s_i^{k\textrm{-NN}} \rVert_2 \big)$ where $s_i^{k\textrm{-NN}}$ denotes $k$-nearest neighbor of $s_i$, and (2) update the policy parameters via policy gradients in the direction of maximizing $\hat r$.
However, this procedure, in principle, requires \emph{on-policy} state particles $\{s_i\}_{i=1}^N$ when estimating the state entropy of the current policy $\bar d^\pi(s)$, implying that the past experiences cannot be reused naively.
Still, for off-policy learning to improve sample-efficiency, existing methods for SEM pre-training~\citep{liu2021behavior,yarats2021protorl} naively use (off-policy) state particles in the replay buffer for entropy estimation to construct reward functions, combined with an off-policy RL algorithm. This may not be guaranteed to converge to an SEM policy as they estimate the replay buffer's state entropy, rather than the entropy of the target policy's state stationary distribution.
Importance sampling can be adopted for off-policy corrections~\citep{mutti2021task} of policy gradients, but it suffers from high variance issue with the curse of horizon~\citep{liu2018breaking}.
Consequently, existing methods for SEM pre-training are either biased (off-policy) or sample-inefficient (on-policy).
The key challenge in devising a sample-efficient SEM algorithm lies in estimating the entropy of the target policy's stationary state distribution using an arbitrary off-policy dataset.




\section{SEMDICE}

In this section, we derive our SEM method, \emph{\textbf{S}tate-\textbf{E}ntropy-\textbf{M}aximization via stationary \textbf{DI}stribution \textbf{C}orrection \textbf{E}stimation (SEMDICE)}, which computes an SEM policy from arbitrary off-policy dataset.
Our derivation starts by formulating the regularized SEM problem via concave programming that directly optimizes stationary distributions, rather than optimizing policy.
All the proofs can be found in Appendix~\ref{appendix:proof}.

\subsection{Concave programming for SEM}
Some careful readers may wonder which policy set should be considered to obtain an optimal SEM policy, as it may not be immediately evident that searching for an optimal SEM policy within stationary Markov policies (instead of richer non-Markovian policies) is sufficient. Fortunately, the following proposition addresses this concern.
\begin{restatable}{proposition}{optimalsempolicy}
    \citep{hazan2019provably} There always exists a \textbf{stationary Markovian} policy that maximizes the entropy of state stationary distribution (i.e., solution of \eqref{eq:policy_entropy_maximization}). 
    Such an optimal stationary Markovian policy is generally stochastic.
    \label{prop:sem_policy_property}
\end{restatable}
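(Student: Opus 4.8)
The plan is to establish a bijective correspondence between valid state-action stationary distributions and stationary Markovian policies, then argue that the SEM objective—being a function of the state marginal $\bar d$ only—can be maximized over the (convex, compact) set of achievable state-action occupancies, with the maximizer inducing the desired stationary Markovian policy. Concretely, I would recall the standard LP characterization: $d \in \Delta(S \times A)$ is the state-action stationary distribution $d^\pi$ of some stationary Markovian policy $\pi$ if and only if it satisfies the Bellman flow constraints $\sum_a d(s,a) = (1-\gamma) p_0(s) + \gamma \sum_{s',a'} T(s \mid s',a') d(s',a')$ for all $s$, together with $d \ge 0$. Call this feasible set $\mathcal{F}$; it is a nonempty compact convex polytope. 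Given any $d \in \mathcal{F}$, the induced policy $\pi_d(a \mid s) := d(s,a) / \sum_{a'} d(s,a')$ (defined arbitrarily where the marginal vanishes) is stationary and Markovian, and one checks $d^{\pi_d} = d$ and $\bar d^{\pi_d}(s) = \sum_a d(s,a)$.

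Next I would observe that the entropy objective $\H[\bar d^\pi]$ depends on $\pi$ only through its state marginal $\bar d^\pi$, and that $\{\bar d^\pi : \pi \text{ any policy}\} = \{\sum_a d(s,a) : d \in \mathcal{F}\}$—the key point being that even non-Markovian, non-stationary policies cannot produce a state occupancy measure outside the projection of $\mathcal{F}$, since the Bellman flow constraints hold for the occupancy measure of \emph{any} policy (this is the classical fact that the set of achievable occupancy measures coincides with $\mathcal{F}$ regardless of the policy class). Therefore $\sup_\pi \H[\bar d^\pi] = \max_{d \in \mathcal{F}} \H\big[\sum_a d(\cdot,a)\big]$, where the max is attained because $\mathcal{F}$ is compact and $\H$ composed with the (continuous, linear) marginalization map is upper semicontinuous. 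Picking any maximizer $d^\star \in \mathcal{F}$ and setting $\pi^\star = \pi_{d^\star}$ yields a stationary Markovian policy achieving the supremum, which proves the first claim.

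For the stochasticity remark, I would give the standard counterexample-flavored argument: deterministic stationary policies induce a finite set of state occupancy measures (at most $|A|^{|S|}$ of them), and the entropy-maximizing occupancy—typically close to uniform over reachable states—generically does not coincide with any of these finitely many extreme behaviors; a short explicit MDP (e.g., a small chain where only a mixed action at one state equalizes the visitation frequencies of two downstream states) exhibits this. I would phrase this as "generally stochastic" precisely because it is not universal—in degenerate MDPs the uniform-covering policy may happen to be deterministic—so the remark only asserts that one cannot restrict to deterministic policies in general.

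\textbf{The main obstacle} I anticipate is making airtight the claim that non-Markovian or history-dependent policies do not enlarge the set of achievable state occupancy measures; this is true and standard (the Bellman flow equations are a statement about marginal visitation frequencies that any policy must respect, and conversely every flow-feasible $d$ is realized by a stationary Markovian policy), but it requires care in the discounted infinite-horizon setting, particularly in verifying that the limit/summation defining $\bar d^\pi$ genuinely satisfies the flow constraints for arbitrary policies and in handling states with zero marginal when defining $\pi_d$. Since Proposition~\ref{prop:sem_policy_property} is attributed to \citet{hazan2019provably}, I would lean on their argument for this occupancy-measure equivalence and focus the exposition on the clean reduction to concave programming over $\mathcal{F}$, which is also exactly the viewpoint the rest of the paper builds on.
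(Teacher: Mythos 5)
Your proposal is correct and follows essentially the same route as the paper: reduce to the polytope of Bellman-flow-feasible occupancy measures, use the standard fact (Puterman) that any such occupancy is realized by the stationary Markovian policy $\pi_d(a|s)=d(s,a)/\sum_{a'}d(s,a')$, and exhibit stochasticity via a small counterexample MDP. If anything, your argument is slightly more complete, since you justify attainment of the maximum via compactness of the feasible polytope and continuity of the marginalization map, whereas the paper simply asserts that a (possibly non-Markovian) maximizer exists before invoking the conversion lemma.
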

By Proposition~\ref{prop:sem_policy_property}, it is sufficient to consider the stationary distribution induced by stationary Markovian policies to obtain an optimal SEM policy. 
We then begin our derivation with the following concave programming problem that optimizes the stationary distributions to solve a (regularized) SEM problem, where the primary objective is to maximize the entropy of state stationary distribution of some target policy while adopting $f$-divergence regularization.
\begin{align}
    &\max_{d, \bar{d} \ge 0} \textstyle \underbrace{\textstyle -\sum\limits_{s} \bar{d}(s)\log \bar{d}(s)}_{\text{state entropy } \mathbb{H}[\bar d(s)]} \underbrace{- \alpha \Df \left(d(s,a) || d^D(s,a)\right) }_{\text{concave regularizer}}\label{eq:cp_objective} \\
    &\textstyle \text{s.t. } \sum\limits_{a'} d(s', a') = (1 - \gamma)p_0(s') + \gamma \sum\limits_{s, a} d(s, a) T(s' | s, a) ~~ \forall{s'} \label{eq:cp_bellman_flow_constraint} \\
    &\textstyle \hspace{13pt} \sum\limits_{a} d(s,a) = \bar{d}(s) ~~~ \forall{s} \label{eq:cp_marginalization_constraint}
\end{align}%
The Bellman flow constraint~\eqref{eq:cp_bellman_flow_constraint} guarantees that $d(s,a)$ is a valid state-action stationary distribution for some policy, where $d(s,a)$ can be understood as a normalized occupancy measure of $(s,a)$. The marginalization constraint~\eqref{eq:cp_marginalization_constraint} ensures that $\bar d(s)$ is the state stationary distribution directly induced by $d(s,a)$. 
In \eqref{eq:cp_objective}, $\Df( d(s,a) || d^D(s,a) ) := \E_{(s,a) \sim d^D} \big[ f \big( \frac{d(s,a)}{d^D(s,a)} \big) \big]$ denotes the $f$-divergence between $d$ and $d^D$, $D = \{(s,a,s')_i\}_{i=1}^N$ denotes any off-policy dataset of interest (e.g., replay buffer of the agent), and $d^D$ is its corresponding distributions. We assume $f$ is a strictly convex function and continuously differentiable.
For brevity, we will abuse the notation $d^D$ to represent $(s,a) \sim d^D$, $(s,a,s') \sim d^D$.
The regularization hyperparameter $\alpha > 0$ balances between maximizing state entropy and preventing distribution shift from past experiences. 

The regularization term $\Df(d||d^D)$ in \eqref{eq:cp_objective} can be understood as imposing trust-region updates~\cite{schulman2015trpo} by constraining the solution to the vicinity of previous state-action visitations.
Technically, the regularization term ensures strict concavity of the objective function with respect to $d$, thereby guaranteeing the uniqueness of the optimal solution for the concave programming~(\ref{eq:cp_objective}-\ref{eq:cp_marginalization_constraint}).
In contrast to existing DICE-based RL methods~\citep{lee2021optidice,lee2022coptidice,nachum2020reinforcement} that formulate optimization problems only for \emph{state-action} stationary distribution, we define the optimization problem that jointly optimizes \emph{state} stationary distribution with the marginalization constraint~\eqref{eq:cp_marginalization_constraint} to deal with \emph{state} entropy.

To sum up, we seek state-action stationary distribution $d$ (and its corresponding state stationary distribution $\bar d$) whose state entropy is being maximized.
Once we have computed the optimal solution $(d^*, \bar d^*)$ of the concave programming~(\ref{eq:cp_objective}-\ref{eq:cp_marginalization_constraint}), its corresponding optimal policy can be obtained by normalized $d^*$ for each state~\citep{Puterman_textbook}: $\pi^*(a|s) = \frac{ d^*(s,a) }{\bar d^*(s)}$.

\subsection{Lagrange dual formulation}
Still, solving (\ref{eq:cp_objective}-\ref{eq:cp_marginalization_constraint}) directly requires a white-box model of the environment, which is inaccessible in many practical applications of RL.
To derive an algorithm that can be fully optimized in a \emph{model-free} manner, we consider the Lagrangian for the constrained optimization problem~(\ref{eq:cp_objective}-\ref{eq:cp_marginalization_constraint})
\begin{align}
    &\textstyle \max\limits_{\bar{d}, d \ge 0} \label{eq:maxmin_lagrangian}
    \min\limits_{\nu, \mu} -\sum\limits_{s} \bar{d}(s)\log\bar{d}(s) - \alpha \Df(d || d^D) 
    + \sum\limits_{s} \mu(s) \Big( \sum\limits_{a} d(s,a) - \bar{d}(s) \Big) 
    \\
    &\hspace{10pt}\textstyle + \sum\limits_{s'} \nu(s') \Big((1 - \gamma) p_0(s') + \gamma \sum\limits_{s,a} d(s,a)T(s'|s,a) - \sum_{a'} d(s',a') \Big) \nonumber 
\end{align}
where $\nu(s) \in \R$ are the Lagrange multipliers for the Bellman flow constraints~\eqref{eq:cp_bellman_flow_constraint}, and $\mu(s) \in \R$ are the Lagrange multipliers for the marginalization constraints~\eqref{eq:cp_marginalization_constraint}.
Then, we rearrange the terms in \eqref{eq:maxmin_lagrangian}:
{\small \begin{align}
    &\textstyle \max\limits_{\bar{d}, d \ge 0} \min\limits_{\nu, \mu} 
    \textstyle(1 - \gamma) \E_{s_0 \sim p_0}[ \nu(s_0) ] - \alpha \E_{(s,a) \sim d^D}\Big[ f \big( \tfrac{d(s,a)}{d^D(s,a)} \big) \Big]  \\
    &\hspace{10pt}\textstyle + \sum\limits_{s,a} d(s,a) \Big( \underbrace{\mu(s) + \gamma \E_{s'}[\nu(s')] - \nu(s)}_{=: e_{\nu,\mu}(s,a)} \Big) \nonumber - \sum\limits_{s} \bar{d}(s) \Big( \mu(s) + \log \bar d(s) \Big) \nonumber
    \\
    =&\textstyle \min\limits_{\nu, \mu} \max\limits_{\bar{d}, d \ge 0}
    \textstyle(1 - \gamma) \E_{s_0 \sim p_0}[ \nu(s_0) ] - \alpha \E_{d^D}\Big[ f \big( \tfrac{d(s,a)}{d^D(s,a)} \big) \Big]   + \sum\limits_{s,a} d(s,a) e_{\nu, \mu}(s,a) - \sum\limits_{s} \bar{d}(s) \Big( \mu(s) + \log \bar d(s) \Big)  \label{eq:minmax_lagrangian}
\end{align}}%
In \eqref{eq:minmax_lagrangian}, we could reorder maximin to minimax thanks to strong duality~\citep{boyd2004convex}.
Finally, using the Fenchel conjugate, we can eliminate the inner maximization problem and end up with a single convex minimization problem\footnote{This simplification from min-max to min was made possible by introducing the optimization variable $\bar d$ along with the marginalization constraint \eqref{eq:cp_marginalization_constraint} (see Appendix~\ref{appendix:why_both_d_d_bar}).}.

\begin{restatable}{theorem}{removeinnermax}
    The minimax optimization problem~\eqref{eq:minmax_lagrangian} is equivalent to solving the following unconstrained minimization problem:
    \begin{align}
        &\min_{\nu, \mu} \textstyle (1 - \gamma) \E_{p_0} [ \nu(s_0) ] + \E_{d^D} \Big[ \alpha f_+^* \big( \tfrac{1}{\alpha} e_{\nu,\mu}(s,a) \big) \Big] + \sum\limits_{s} \exp(-\mu(s) - 1) =: L(\nu, \mu) \label{eq:min_objective1}
    \end{align}
    where $f_+^*(y) = \max_{x \ge 0} xy - f(x)$. The objective function $L(\nu, \mu)$ is convex for $\nu$ and $\mu$. 
    Furthermore, given the optimal solution $(\nu^*, \mu^*) = \argmin_{\nu, \mu} L(\nu, \mu)$, the stationary distribution corrections of the optimal SEM policy are given by:
    \begin{align}
        \frac{d^*(s,a)}{ d^D(s,a) } 
        &= (f')^{-1} \Big( \tfrac{1}{\alpha} \Big( \underbrace{\mu^*(s) + \gamma \E_{s'}[\nu^*(s')] - \nu^*(s)}_{= e_{\nu^*, \mu^*}(s,a) } \Big) \Big)_+ =: w_{\nu^*,\mu^*}^*(s,a)
        \label{eq:w_definition}
    \end{align}
    where $x_+ := \max(0, x)$.
    \label{thm:remove_inner_max}
\end{restatable}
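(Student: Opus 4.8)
The plan is to eliminate the inner maximization over $\bar d \ge 0$ and $d \ge 0$ in \eqref{eq:minmax_lagrangian} separately, since the objective decouples: given fixed multipliers $\nu, \mu$, the $d$-dependent terms are $-\alpha \E_{d^D}[f(d/d^D)] + \sum_{s,a} d(s,a) e_{\nu,\mu}(s,a)$, and the $\bar d$-dependent terms are $-\sum_s \bar d(s)(\mu(s) + \log \bar d(s))$.

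First I would handle the $\bar d$ term. For each state $s$ independently, I would maximize $g(\bar d) := -\bar d(\mu(s) + \log \bar d)$ over $\bar d \ge 0$ (using the convention $0 \log 0 = 0$, so the supremum is attained on $\bar d > 0$). Setting $g'(\bar d) = -\mu(s) - \log \bar d - 1 = 0$ gives $\bar d^*(s) = \exp(-\mu(s) - 1)$, and plugging back in yields $g(\bar d^*(s)) = \exp(-\mu(s)-1)$. Summing over $s$ produces the term $\sum_s \exp(-\mu(s)-1)$ in $L(\nu,\mu)$. This is the elementary part.

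Next I would handle the $d$ term, which is the part where the Fenchel conjugate enters. Writing $x(s,a) := d(s,a)/d^D(s,a) \ge 0$, the $d$-dependent part of \eqref{eq:minmax_lagrangian} becomes $\E_{(s,a)\sim d^D}\big[ -\alpha f(x(s,a)) + x(s,a) e_{\nu,\mu}(s,a) \big]$. Pointwise maximization over $x \ge 0$ gives $\max_{x \ge 0}\big( x \cdot e_{\nu,\mu}(s,a) - \alpha f(x) \big) = \alpha \max_{x\ge 0}\big( x \cdot \tfrac{1}{\alpha} e_{\nu,\mu}(s,a) - f(x) \big) = \alpha f_+^*\big(\tfrac1\alpha e_{\nu,\mu}(s,a)\big)$, which is exactly the middle term of $L(\nu,\mu)$; the $(1-\gamma)\E_{p_0}[\nu(s_0)]$ term carries over unchanged. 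For the maximizer characterization \eqref{eq:w_definition}: since $f$ is strictly convex and differentiable, the unconstrained optimum of $x \mapsto xy - f(x)$ is $x = (f')^{-1}(y)$, and imposing $x \ge 0$ gives the clipped form $x^*(s,a) = (f')^{-1}\big(\tfrac1\alpha e_{\nu,\mu}(s,a)\big)_+$ (using convexity of $f$, hence monotonicity of $f'$, so clipping the argument where $(f')^{-1}$ would be negative is equivalent); evaluating at $(\nu^*,\mu^*)$ gives $w^*_{\nu^*,\mu^*}$. I would also invoke the envelope theorem / standard duality to conclude that this $d^*$ is the primal optimum. Finally, convexity of $L$ in $(\nu,\mu)$ follows because $e_{\nu,\mu}(s,a)$ is affine in $(\nu,\mu)$, $f_+^*$ is convex as a supremum of affine functions, $\exp(-\mu(s)-1)$ is convex in $\mu$, and the $p_0$ term is linear — so $L$ is a nonnegative combination of convex functions.

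The main obstacle I anticipate is the careful justification of the clipped form in \eqref{eq:w_definition} and the interchange of maximin to minimax that \eqref{eq:minmax_lagrangian} already asserts: one must check that $f_+^*$ is well-defined and finite on the relevant range (i.e. that the pointwise sup over $x \ge 0$ is attained), handle the boundary case where the unconstrained stationary point $(f')^{-1}(\tfrac1\alpha e_{\nu,\mu})$ is negative and the maximizer is $x = 0$, and confirm that Slater-type conditions hold so that strong duality and the envelope theorem apply — points that are routine but need the assumptions that $f$ is strictly convex, continuously differentiable, and $\alpha > 0$.
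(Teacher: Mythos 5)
Your proposal matches the paper's proof essentially step for step: the paper likewise decouples the inner maximization, solves for $\bar d^*(s) = \exp(-\mu(s)-1)$ by the first-order condition (with a second-order check), substitutes $w = d/d^D$ and eliminates the $w$-maximization via the Fenchel conjugate $f_+^*$, obtains the clipped maximizer $w^* = (f')^{-1}(\tfrac{1}{\alpha}e_{\nu,\mu})_+$ from the stationarity condition, and concludes convexity of $L$ from convexity of $f_+^*$. Your additional remarks on the boundary case $x=0$ and on strong duality are sound points of care that the paper treats only implicitly, but they do not change the route.
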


In short, by operating in the space of stationary distributions, computing a stationary Markovian SEM policy can, in principle, be addressed by solving a \emph{convex minimization} problem.
The resulting policy can be obtained by $\pi^*(a|s) = \frac{d^*(s,a)}{\sum_{a'}d^*(s,a')} = \frac{w_{\nu^*,\mu^*}^*(s,a) d^D(s,a)}{\sum_{a'} w_{\nu^*,\mu^*}^*(s,a') d^D(s,a')}$ in the case of finite MDPs.


\subsection{Practical algorithm}

Still, one practical issue in \eqref{eq:min_objective1} is that optimizing it is unstable due to its inclusion of $\exp(\cdot)$ term, often causing exploding gradient problems. To remedy this issue, we consider the following numerically stable alternative objective function $\wt L$ that replaces $\sum_{s}\exp(-\mu(s) - 1)$ with $\log \sum_{s} \exp(-\mu(s))$.

\begin{restatable}{theorem}{thmlogsumexp}
    Define the objective functions $\wt L(\nu, \mu)$ as
    {\small \begin{align}
        \wt L (\nu, \mu) := (1 - \gamma) \E_{s_0} \big[  \nu(s_0) \big] 
        + \E_{(s,a) \sim d^D} \Big[ \alpha f_+^* \Big( \tfrac{1}{\alpha} e_{\nu,\mu}(s,a) \Big) \Big] + \red{\textstyle \log \sum\limits_{s} \exp(-\mu(s)) }
         \label{eq:min_objective_logsumexp}
    \end{align}}%
    Then, for any optimal solutions $(\nu^*, \mu^*) = \argmin_{\nu, \mu} L(\nu, \mu)$ and $(\wt \nu^*, \wt \mu^*) = \argmin_{\nu, \mu} \wt L(\nu, \mu)$, the following holds:
    \begin{align}
       \textstyle L(\nu^*, \mu^*) = \wt L(\wt \nu^*, \wt \mu^*)
    \end{align}
    Also, there exists a constant $C$ such that the following holds:
    \begin{align}
        \mu^* = \wt \mu^* + C ~\text{ and }~
        \nu^* = \wt \nu^* + \tfrac{C}{1 - \gamma}
    \end{align}
    \label{thm:L_L_tilde_relationship}
\end{restatable}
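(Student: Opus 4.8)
The plan is to exploit the fact that the only difference between $L$ and $\wt L$ is replacing the term $g(\mu):=\sum_s \exp(-\mu(s)-1)$ with $\wt g(\mu):=\log\sum_s\exp(-\mu(s))$, and that these two functions are related by a one-dimensional calibration in the ``total shift'' direction. Concretely, for $t\in\R$ write $\mu+t$ for the function $s\mapsto \mu(s)+t$. First I would record the two scaling identities $g(\mu+t)=e^{-t}g(\mu)$ and $\wt g(\mu+t)=\wt g(\mu)-t$, and the elementary minimization-over-$t$ fact $\min_{t\in\R}\big(e^{-t}g(\mu)\big)$ is not what we want directly; rather the useful observation is that $\wt g(\mu)=\min_{t\in\R}\big(g(\mu+t)+t\big)$, with the minimizer $t^\star=\log g(\mu)$ (set derivative $-e^{-t}g(\mu)+1=0$). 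So $\wt L$ is exactly the ``$t$-reduced'' version of an augmented objective.

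Next I would examine how the $\nu$-dependent part of $L$ behaves under the compensating shift. The terms $(1-\gamma)\E_{p_0}[\nu(s_0)]$ and $e_{\nu,\mu}(s,a)=\mu(s)+\gamma\E_{s'}[\nu(s')]-\nu(s)$ are the key: if we replace $\mu\mapsto\mu+t$ and simultaneously $\nu\mapsto\nu+\tfrac{t}{1-\gamma}$, then $e_{\nu,\mu}(s,a)$ is unchanged (since $\mu(s)$ gains $t$ while $\gamma\E[\nu(s')]-\nu(s)$ gains $\gamma\tfrac{t}{1-\gamma}-\tfrac{t}{1-\gamma}=-t$), so the middle $f_+^*$ term is invariant, while $(1-\gamma)\E_{p_0}[\nu(s_0)]$ gains exactly $t$. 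Therefore $L\big(\nu+\tfrac{t}{1-\gamma},\,\mu+t\big)=(1-\gamma)\E_{p_0}[\nu]+\E_{d^D}[\alpha f_+^*(\cdots)]+t+g(\mu+t)$, and minimizing the right-hand side over $t$ gives precisely $\wt L(\nu,\mu)$ by the calibration identity above. This shows $\wt L(\nu,\mu)=\min_{t}L(\nu+\tfrac{t}{1-\gamma},\mu+t)$, from which $\min_{\nu,\mu}\wt L=\min_{\nu,\mu}L$ follows immediately (one direction: any $(\nu,\mu)$ gives $\wt L(\nu,\mu)\ge\min L$; the other: $\wt L(\nu,\mu)\le L(\nu,\mu)$ by taking $t=0$ unless... — actually cleaner: the map $(\nu,\mu)\mapsto(\nu+\tfrac{t}{1-\gamma},\mu+t)$ is a bijection of the domain for each fixed $t$, so the two infima coincide).

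For the second claim, I would argue at the level of optimizers. Given $(\wt\nu^*,\wt\mu^*)$ minimizing $\wt L$, the reduction identity says $\wt L(\wt\nu^*,\wt\mu^*)=L(\wt\nu^*+\tfrac{t^\star}{1-\gamma},\,\wt\mu^*+t^\star)$ where $t^\star=\log g(\wt\mu^*)$; since the left side equals $\min L$, the point $(\nu',\mu'):=(\wt\nu^*+\tfrac{t^\star}{1-\gamma},\,\wt\mu^*+t^\star)$ is a minimizer of $L$. If $L$ has a unique minimizer $(\nu^*,\mu^*)$ this forces $\mu^*=\wt\mu^*+C$ and $\nu^*=\wt\nu^*+\tfrac{C}{1-\gamma}$ with $C=t^\star$. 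To get uniqueness I would invoke strict convexity of $L$ in the relevant directions — from Theorem~\ref{thm:remove_inner_max}, $f_+^*$ is strictly convex on the effective region because $f$ is strictly convex, and $\exp$ is strictly convex — modulo the one-dimensional flat direction; but that flat direction is exactly the $(\tfrac{t}{1-\gamma},t)$ ray we already accounted for, and along it $L$ is strictly convex in $t$ (the $g$ term is $e^{-t}\cdot\text{const}$ plus a linear term), pinning down $C$ uniquely.

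The main obstacle I anticipate is the uniqueness/identifiability bookkeeping: $L$ is genuinely not strictly convex in all of $(\nu,\mu)$-space, so one must carefully quotient out the $(\tfrac{t}{1-\gamma},t)$ symmetry of the $f_+^*$ term, show $L$ is strictly convex on the quotient (this uses $p_0$ having full support so $\E_{p_0}[\nu(s_0)]$ is not degenerate, and $d^D$ having full support so the $f_+^*$ term sees every $(s,a)$), and then check that the residual one-parameter family is handled by the strict convexity of $t\mapsto e^{-t}g+t$. If the paper's Appendix already establishes uniqueness of $\argmin L$ under support assumptions, this step collapses and only the change-of-variables identity remains, which is routine.
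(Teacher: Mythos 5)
Your proposal is correct and follows essentially the same route as the paper: your calibration identity $\wt g(\mu)=\min_{t}\bigl(g(\mu+t)+t\bigr)$ is a variational repackaging of the paper's two lemmas (the joint-shift invariance $\wt L(\nu,\mu)=\wt L(\nu+\tfrac{C}{1-\gamma},\mu+C)$ and the pointwise bound $L\ge\wt L$ obtained from $x-1\ge\log x$), and your shifted minimizer with $t^\star=\log\sum_{s}\exp(-\wt\mu^*(s)-1)$ is exactly the point the paper constructs. One remark: your caution about uniqueness is warranted but applies equally to the paper --- its proof, like yours, only establishes that \emph{some} optimal solution of $L$ has the form $(\wt\nu^*+\tfrac{C}{1-\gamma},\wt\mu^*+C)$, whereas the theorem as stated quantifies over \emph{all} optimal pairs; closing that gap would require the strict-convexity and support argument you sketch, which the paper does not supply.
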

Note that the gradient $\nabla_{s} \sum_{s} \log \sum_{s} \exp(-\mu(s)) = -\frac{ \exp( -\mu(s) ) }{\sum_{s'} \exp(-\mu(s'))} \nabla_{s} \mu(s)$ normalizes $\exp(\cdot)$ by softmax, thus $\wt L$ no longer suffer from numerical instability by large gradients.
At first glance, it seems that optimizing \eqref{eq:min_objective_logsumexp} could yield a solution that is completely different from that of \eqref{eq:min_objective1}.
However, Theorem~\ref{thm:L_L_tilde_relationship} shows that their optimal objective function values are the same and their optimal solutions only differ in a constant shift. Furthermore, despite its constant shift, it does not change anything for $w^*$ computation in \eqref{eq:w_definition}, as can be seen as follows:
\begin{align*}
    e_{\wt \nu^*, \wt \mu^*}(s,a) 
    &= \big( \mu^*(s) - C  \big) + \gamma \Big( \E[\nu^*(s')] - \tfrac{C}{1 - \gamma} \Big) - \Big( \nu^*(s) - \tfrac{C}{1 - \gamma} \Big) \\
    &= \mu^*(s) + \gamma \E[\nu^*(s')] - \nu^*(s) = e_{\nu^*, \mu^*}(s,a) \\
    \therefore w^*_{\nu^*, \mu^*}(s,a) &=  w^*_{\wt \nu^*, \wt \mu^*}(s,a) 
 \end{align*}

Still, \eqref{eq:min_objective_logsumexp} requires summing up (or integrating) every possible state, which is intractable for large (or continuous) state space. Therefore, we approximate the $\log \sum \exp(-\mu(s))$ term via Monte-Carlo integration with an arbitrary distribution $q$: $\log \sum \exp(-\mu(s)) = \log \E_{q} [\exp(-\mu(s) - \log q(s))]$.
Finally, we optimize the following objective function with $q(s) = \bar d^D(s)$:
\begin{align}
    &\min_{\nu, \mu} \textstyle (1 - \gamma) \E_{s_0} \big[  \nu(s_0) \big] + \E_{(s,a) \sim d^D} \Big[ \alpha f_+^* \Big( \tfrac{1}{\alpha} e_{\nu,\mu}(s,a) \Big) \Big]  \label{eq:min_objective_logsumexp_dD} \\
    &\hspace{15pt}+ \log \E_{s\sim \bar d^D(s)} \Big[ \exp(-\mu(s) - \log \bar d^D(s)) \Big]
    \nonumber
\end{align}

The last remaining challenge is that \eqref{eq:min_objective_logsumexp_dD} still cannot be naively optimized in a fully model-free manner in continuous domains, as it contains computing expectations w.r.t. the transition model inside the non-linear function $f_+^*(\cdot)$. For a practical implementation, we use the following objective that can be easily optimized via sampling only from $d^D$:
\begin{align}
    &\min_{\nu, \mu} \textstyle (1 - \gamma) \E_{p_0} [ \nu(s_0) ] + \E_{(s,a,s') \sim d^D} \Big[ \alpha f_+^* \big( \tfrac{1}{\alpha} \hat e_{\nu,\mu}(s,a,s') \big) \Big]  \label{eq:min_objective_logsumexp_sample_final} \\
    &\hspace{15pt}+ \log \E_{s\sim \bar d^D(s)} \Big[ \exp(-\mu(s) - \log \bar d^D(s)) \Big] =: \hat L(\nu, \mu) \nonumber
\end{align}%
where $\hat e(s,a,s') := \mu(s) + \gamma \nu(s') - \nu(s)$ is a single-sample estimate of $e(s,a)$. Note that every term in \eqref{eq:min_objective_logsumexp_sample_final} can now be evaluated only using the $(s,a,s')$ samples from $D$, thus we can optimize $\nu$ and $\mu$ easily.
Although \eqref{eq:min_objective_logsumexp_sample_final} is a biased estimate of \eqref{eq:min_objective_logsumexp_dD}, one can easily show that $\hat L(\nu, \mu)$ is an upper bound of $L(\nu, \mu)$, i.e. $L(\nu, \mu) \le \hat L(\nu, \mu)$ always holds, where the equality holds when the MDP is deterministic, by Jensen's inequality and the convexity of $f_+^*(\cdot)$.
Once we get the optimal solution $(\hat \nu^*, \hat \mu^*) = \argmin_{\nu, \mu} \hat L(\nu, \mu)$, we end up with the stationary distribution corrections of the optimal SEM policy: $\frac{d^{\pi^*}(s,a)}{d^D(s,a)}= w_{\hat \nu^*, \hat \mu^*}(s,a)$ by \eqref{eq:w_definition}.
In practice, to deal with continuous state space, we parameterize $\nu_\theta: S \rightarrow \R$ and $\mu_\omega: S \rightarrow \R$ using simple MLPs, which take the state $s$ as an input and output a scalar value, and optimize the network parameters.

\paragraph{Policy Extraction}
The final remaining question is how to obtain an explicit policy, as our method computes the distribution correction ratios $w^*$ in \eqref{eq:w_definition} instead of the policy itself directly. We follow the i-projection used in \citep{lee2021optidice}.
\begin{align}
    &\min_{\pi}~
	\mathbb{KL}
	\left(
	    d^D(s) \pi(a|s) || d^D(s) \pi^*(a|s)
	\right)\\
    &= \E_{\hspace{-1.6pt}\substack{s\sim d^D \\ a\sim\pi}}
	- \left[ 
	    \log w^*(s, a)
	    -
	    \Dkl(\pi(\bar{a}|s)||\pi_D(\bar{a}|s))
	\right]
	+C \label{eq:iprojection_kl}
\end{align}
Essentially, minimizing \eqref{eq:iprojection_kl} corresponds to computing a policy $\pi(\cdot | s)$ that mimics the optimal SEM policy $\pi^*(\cdot | s)$ for each state $s \in D$.
In summary, SEMDICE computes the SEM policy as follows: First, minimize \eqref{eq:min_objective_logsumexp_sample_final}, and Second, extract the policy using the obtained $w_{\nu^*, \mu^*}$ by i-projection~\eqref{eq:iprojection_kl}. In practice, rather than training until convergence at each iteration, we perform a single gradient update for $\nu$, $\mu$, and $\pi$.
We outline the details of policy extraction (Appendix~\ref{appendix:policy_extraction}) and the full learning procedure with pseudo-code in Algorithm~\ref{alg:semdice} (Appendix~\ref{appendix:pseudocode}).

\begin{remark}
    Minimizing $\wt L(\nu, \mu)$ in \eqref{eq:min_objective_logsumexp_dD} is equivalent to solving the original (regularized) state-entropy-maximization problem (\ref{eq:cp_objective}-\ref{eq:cp_marginalization_constraint}), demonstrating that computing a (regularized) SEM policy can, in principle, be achieved by arbitrary off-policy dataset $D$.
    Note that this approach directly optimizes the state entropy of a target policy, rather than optimizing the state entropy of the replay buffer as in \citep{yarats2021protorl,liu2021behavior}.
    To the best of our knowledge, SEMDICE is the first principled and practical off-policy SEM algorithm that can learn from arbitrary off-policy experiences.
    Also, \citep{hazan2019provably} showed that state entropy is not concave for policy parameters, whereas state entropy is concave for the stationary distribution space $\bar d(s)$.
    That being said, our method, which directly optimizes stationary distributions, may offer better convergence properties compared to existing policy-based algorithms. However, providing a formal analysis for the convergence guarantee remains as future work.
\end{remark}

\section {Related Work}
\paragraph{Unsupervised RL and State Entropy Maximization}
Our work falls within the realm of unsupervised reinforcement learning, specifically addressing the challenge of task-agnostic exploration.
The reward-free exploration framework has gained significant attention in recent years~\citep{jin2020rewardfree,tarbouriech2020activemodelestimationmarkov,kaufmann2020adaptiverewardfreeexploration}. While these methods share a similar context to our work, they pursue largely orthogonal objectives.

State entropy maximization is a particular instance of reward-free exploration objective, where the agent aims to estimate the density of states and maximize entropy~\citep{hazan2019provably,lee2020smm,liu2021behavior,yarats2021protorl,mutti2022importance,tiapkin2023fast,kim2023accelerating,yang2023CEM}. However, these methods mostly rely on policy gradients with the reward function for the estimated state entropy, and they are either sample inefficient due to their requirements of on-policy samples~\citep{hazan2019provably,mutti2022importance}, or optimize the biased objective due to estimating the state entropy of the replay buffer~\citep{liu2021behavior,yarats2021protorl}. In contrast, our method operates directly in the space of stationary distributions and serves as the first principled off-policy SEM method.

Besides state entropy maximization, various self-supervised objectives have been explored for unsupervised RL pre-training~\citep{laskin2021urlb}. The goal of pre-training in this context is to compute a policy without relying on task-specific reward functions, enabling rapid adaptation to future, unknown downstream tasks defined by reward functions.
Unsupervised RL pre-training algorithms are categorized in three main ways~\citep{laskin2021urlb}.
\textbf{Knowledge-based} methods aim to increase knowledge about the environment by maximizing prediction error~\citep{pathak2017icm,burda2018rnd,Pathak2019disagreement}. 
\textbf{Competence-based} methods~\citep{lee2020smm,eysenbach2018diayn,liu2021aps,laskin2022cic} aim to learn explicit skill representations by maximizing the mutual information between encoded observation and skill.
Lastly, 
\textbf{data-based} methods~\citep{liu2021behavior,yarats2021protorl} aim to achieve data diversity via particle-based entropy maximization. Our SEMDICE is categorized into the data-based method when considering unsupervised RL pre-training.



\paragraph{Stationary DIstribution Correction Estimation (DICE)}
DICE-family algorithms perform stationary distribution estimation and have shown significant promise in various off-policy learning scenarios in reinforcement learning (RL), such as off-policy evaluation~\citep{nachum2019dualdice,Zhang2020GenDICE,Zhang2020GradientDICE,Yang2020offpolicy}, reinforcement learning~\citep{lee2021optidice,kim2024poreldice,mao2024odice}, constrained RL~\citep{lee2022coptidice}, imitation learning~\citep{kim2022demodice,ma2022versatileofflineimitationobservations,kim2022lobsdice,sikchi2024dualrlunificationnew}, and more. However, to the best of our knowledge, no DICE-based algorithms have been proposed to address state entropy maximization.

\section{Experiments}

In this section, we empirically evaluate our SEMDICE and baselines: to demonstrate (1) SEMDICE converge to an optimal SEM policy, (2) visualization of SEMDICE's state visitation, (3) how efficiently SEMDICE pre-trained policy can adapt to downstream tasks on URL benchmarks~\citep{laskin2021urlb}.
Ablation experiments on the choice of $f$ and $\alpha$ for SEMDICE can be found in the Appendix~\ref{appendix:ablation}.

\subsection{State entropy maximization in finite MDPs}

\begin{figure*}[t]
    \centering
    \includegraphics[width=0.92\textwidth]{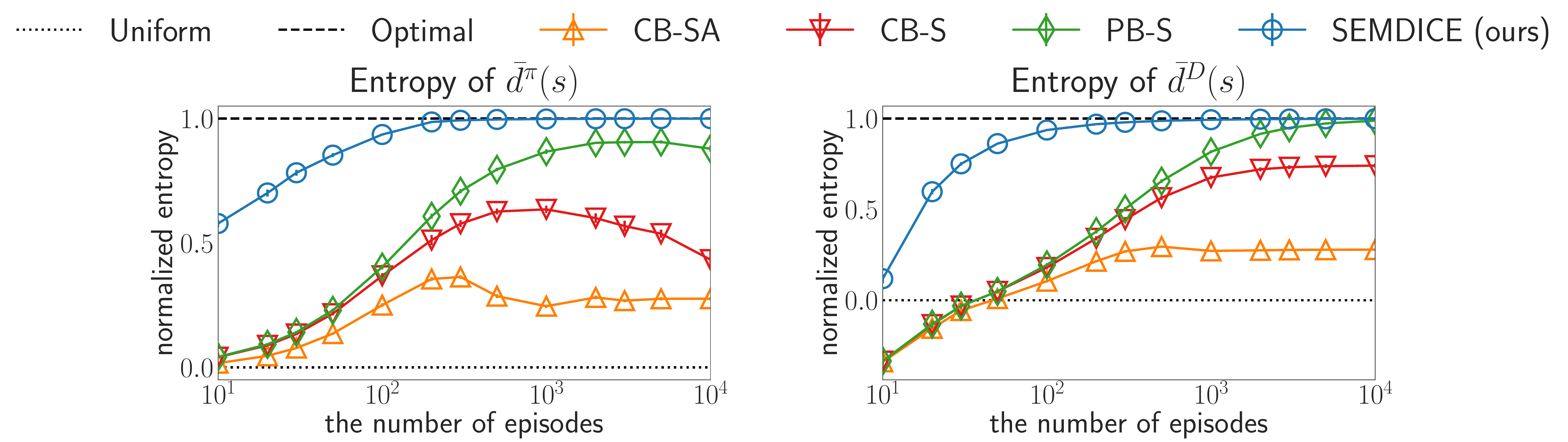}
    \caption{Performance of SEMDICE and baselines in randomly generated tabular MDPs. The first column indicates normalized policy entropy (0: state entropy of uniform policy, 1: state entropy of optimal SEM policy), the second column indicates normalized entropy of the cumulative experiences. We observe that only SEMDICE converges to an optimal SEM policy, whereas baselines do not directly maximize the state entropy.}
    \label{fig:tabular_mdp}
\end{figure*}

We first evaluate SEMDICE and baseline algorithms on randomly generated finite MDPs with $20$ states and $4$ actions to show how effectively SEMDICE maximizes the state entropy, compared to baselines, when optimized using the off-policy dataset (the entire replay buffer). We conduct repeated experiments with different seeds for 100 runs.

\paragraph{Baselines}
Baselines are policy-gradient-based methods, with different (non-stationary) intrinsic reward functions $\hat r$. We consider the following baselines, and they can be thought of as analogies to existing unsupervised RL approaches.
(1) \textbf{CB-SA}: it constructs the state-action-count-based exploration rewards by $\hat r(s,a) = \frac{1}{\sqrt{N(s,a)}}$, where $N(s,a)$ is the cumulative $(s,a)$-visitation counts by the agent. This baseline can be understood as an analogy to the existing count-based methods or RND~\citep{burda2018rnd}.
(2) \textbf{CB-S}: it constructs state-count-based rewards $\hat r(s) = \frac{1}{\sqrt{N(s)}}$, where $N(s)$ is the $s$-visitation counts. It is similar to CB-SA but only considers state visitation frequencies, thus it is expected to be closer to state-entropy maximization.
(3) \textbf{PB-S}: it constructs entropy-based rewards $\hat r(s) = - \log d^D(s)$, where $d^D$ is the empirical state distribution of the cumulative experiences (which is \textbf{not} direct estimate of $- \log d^\pi(s)$). This baseline can be understood as an analogy to existing SEM methods performing off-policy updates~\citep{liu2021behavior,yarats2021protorl}.
(4) \textbf{Uniform}: sample actions uniformly at random.

For all baselines, we first collect trajectories using the current policies and then construct the non-stationary rewards function based on visitation counts/data-state-entropy measures accordingly. Lastly, we perform policy gradient ascents along the direction of maximizing the constructed reward function.

\paragraph{Evalution}
For each run, we (1) generate a random MDP and initialize the policy as uniform policy; (2) For each method, use the current policy $\pi$ to collect 10 episodes and add them to the replay buffer $D$; (3) Using $D$, compute the MLE transition matrix $\hat{T}$, the intrinsic reward function $\hat{r}$, and the value function $Q_{\hat r}^\pi$ with respect to the intrinsic reward; (4) Perform off-policy policy gradient ascent based on the estimated $Q_{\hat r}^\pi$; (5) Repeat steps (2)-(4) until $D$ contains $1000$ episodes.
Throughout the process, we compute the state entropy of the current policy $\bar{d}^{\pi}(s)$ as well as the entropy of the replay buffer $\bar{d}^{D}(s)$.

\paragraph{Results} 
Figure~\ref{fig:tabular_mdp} presents the results.
We observe that \textbf{CB-S} baseline generates a slightly better SEM policy than \textbf{CB-SA}, which is natural as it focuses on the exploration of the unvisited states while ignoring action uncertainty.
\textbf{PB-S} baseline could obtain nearly maximal state-entropy for the \emph{dataset} in the dataset $\bar d^D(s)$, but it is not directly optimizing the target policy's state entropy $\bar d^\pi(s)$.
\textbf{SEMDICE} is the only algorithm that could converge to an optimal SEM policy efficiently. It also achieves much better sample efficiency in terms of the state entropy of dataset that \textbf{PB-S}.
Although we didn't include the result, we also tested \emph{on-policy} policy-gradient approach for SEM. It could converge to an optimal SEM policy with a carefully chosen learning rate, but it required $\times 100$ more samples to reach near-optimal entropy maximization.
This results confirm that SEMDICE is capable of computing an optimal SEM policy even from the arbitrary off-policy dataset.
In Appendix~\ref{appendix:tabular_semdice_random}, we further show that SEMDICE still converges to the optimal SEM policy even when the dataset is collected by a purely off-policy agent (i.e. uniform random policy).

\begin{figure*}[t]
    \centering
    \includegraphics[width=\textwidth]{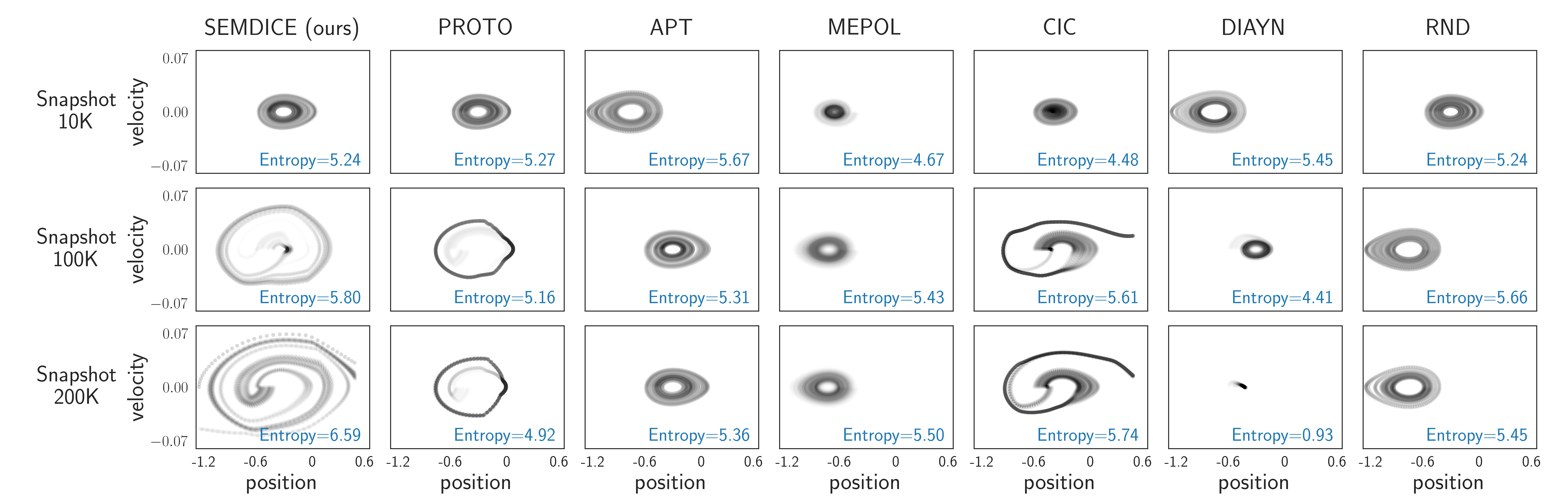}
    \vspace{-10pt}
    \caption{Mountaincar state coverage: Visualization of agent pretraining results across SEMDICE and baselines. During the reward-free pretraining stage, we save model snapshots at 10K, 100K, and 200K steps. From each snapshot, we draw 30k samples via environment interactions for visualizations. The state space is discretized into uniform $51 \times 51$ bins, and the empirical distribution is normalized to compute the entropy shown at the bottom of each subplot. While existing methods failed to converge to a state-entropy-maximizing policy, SEMDICE successfully did so.}
    \label{fig: mcar_visualization}
\end{figure*}


\subsection{State entropy maximization: Visualization}
This section aims to visualize the behavior of SEMDICE and baselines for their state visitations during policy learning.
To this end, we use MountainCar (Continuous) from Gymnasium \citep{towers_gymnasium_2023}, a classic control task with a low-dimensional continuous state and action space.
The task involves a deterministic MDP where a car must escape from a sinusoidal valley by building momentum.
The observation space is two-dimensional, consisting of \texttt{position} $\in [-1.2, 0.6]$ and \texttt{velocity} $\in [-0.07, 0.07]$, while actions correspond to applying directional forces in the range $[-1, 1]$.
The low dimensionality allows for clear visualization of state visitation distributions in the 2D plane, facilitating straightforward comparisons between different agents.
\paragraph{Baselines}
We compare SEMDICE to baselines from three categories of unsupervised RL.
\textbf{Knowledge-based} methods include RND \citep{burda2018rnd}, which relies on prediction error as an exploration signal.
\textbf{Data-based} methods aim to directly maximize state entropy, which can be our direct interest, and include ProtoRL \citep{yarats2021protorl}, APT \citep{liu2021behavior}, and MEPOL \citep{mutti2021task}.
\textbf{Competence-based} methods such as DIAYN \citep{eysenbach2018diayn} and CIC \citep{laskin2022cic} focus on learning diverse skills by maximizing the mutual information between skills and observations.

\paragraph{Evaluation and Results}
We evaluate all methods based on their overall state coverage and entropy during the reward-free pretraining phase.
Specifically, we take snapshots of each policy at 10K, 100K, and 200K environment steps under purely intrinsic rewards. For each snapshot, we collect 30,000 state samples by interacting with the environment using the fixed policy, and visualize the corresponding state visitation distribution.
To compute the entropy of the stationary state distribution induced by each policy, we discretize the 2D continuous observation space into a $51 \times 51$ grid and count the number of visits to each grid cell.
Note that all methods operate directly in continuous state and action spaces; discretization is used solely for entropy computation.

Figure~\ref{fig: mcar_visualization} shows that existing state entropy maximization methods (PROTO, APT, MEPOL) fail to converge to a single SEM policy. Instead, their learned policies exhibit non-stationary behavior throughout training, likely due to their dependence on a non-stationary, learned reward function.
In contrast, SEMDICE progressively increases state coverage over time and eventually converges to a single, stable SEM policy, which is in contrast to the oscillatory behaviors observed in the baselines.

\subsection{URL Benchmark}

\begin{figure}[t]
    \centering
    \includegraphics[width=\textwidth]{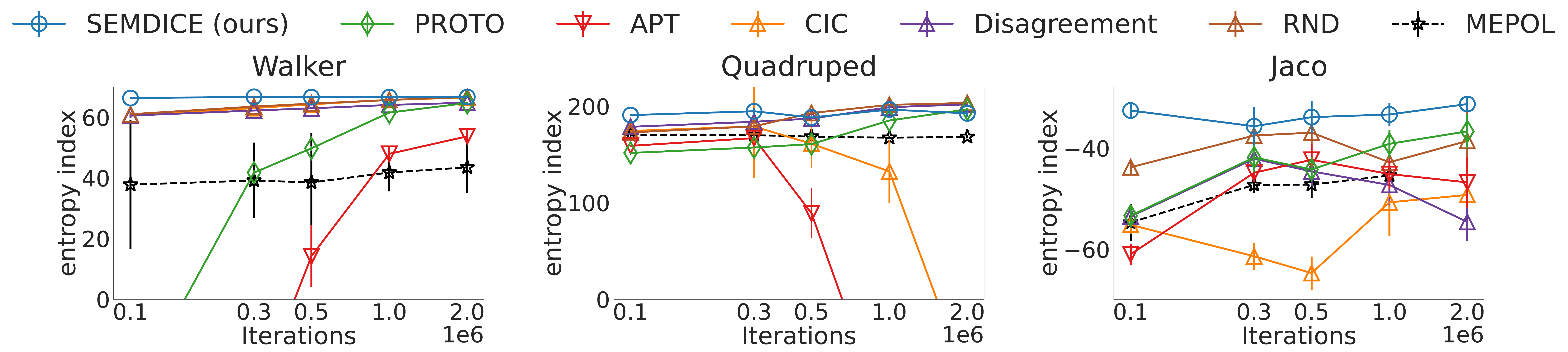}
    \vspace{-10pt}
    \caption{\small Particle-based entropy estimations of pretrained SEMDICE and baselines in URLB~\citep{laskin2021urlb}. We follow the setup from URLB and pretrain each agent with 2M environment interactions. We save policy snapshots at $[0.1\text{M}, 0.3\text{M}, 0.5\text{M}, 1\text{M}, 2\text{M}]$ steps, and report the change of entropy index along the training process. We observe that SEMDICE achieves maximum state entropy in both Walker and Jaco Arm domain, and close to maximum state entropy in Quadruped domain. It is also the most sample efficient approach amongst all agents: with as few as $100K$ environment interactions, SEMDICE learns a proper state entropy maximizing policy. For example, in Walker domain, multiple agents like RND, Disagreement, and PROTO managed to achieve similar state entropy as SEMDICE. However, they are much less sample efficient, and only converged to maximum state entropy policy after close to $2M$ environment interactions.}
    \label{fig:entropy_whole}
\end{figure}

\begin{figure*}[t]
    \centering
    \includegraphics[width=\textwidth]{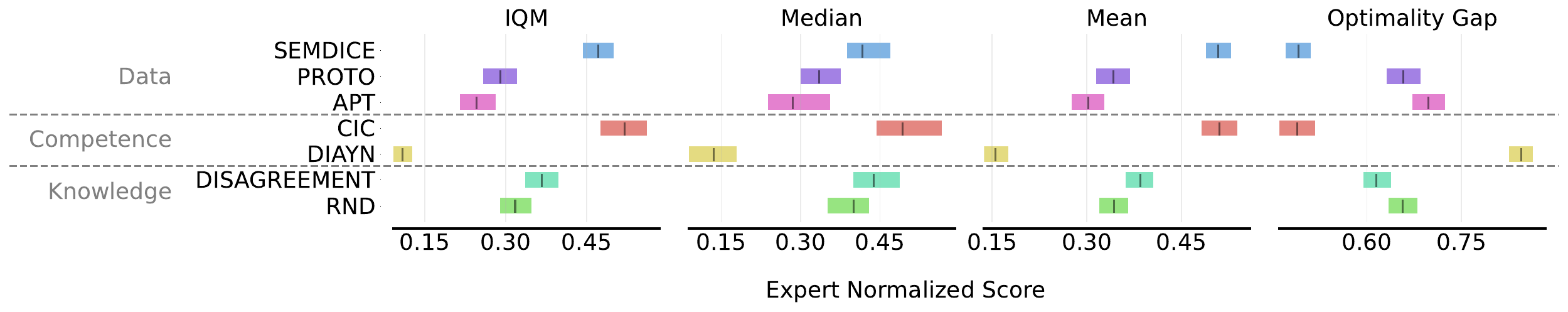}
    \vspace{-10pt}
    \caption{
    We evaluate all methods on the 12 tasks provided by the URLB benchmark~\citep{laskin2021urlb}, covering three domains:
    Walker (stand, walk, flip, run) and Quadruped (stand, walk, run, jump) are locomotion tasks that require maintaining balance and strategically controlling the body;  
    Jaco Arm (reach bottom left, bottom right, top left, top right) consists of manipulation tasks involving precise control of a 6-DOF robotic arm with a three-finger gripper to reach specified target positions.
    To ensure statistically reliable comparisons, we use \texttt{rliable}~\citep{Agarwal2021Rliable} to report aggregate statistics across multiple seeds and tasks, along with stratified bootstrap confidence intervals.  
    Each plot is generated using a total of 120 runs ($3$ domains $\times$ $4$ tasks/domain $\times$ $10$ seeds).
    }

    \label{fig:urlb}

\end{figure*}


Finally, we evaluate the state-entropy maximization and fine-tuning RL performance using tasks from URLB~\citep{laskin2021urlb}.
URLB consists of twelve tasks across three different domains: Walker, Quadruped, and Jaco Arm. During the pre-training phase, SEMDICE and baselines computes the pre-trained policy without task-specific reward, solely relying on their intrinsic objective functions. Once the policy pre-training is done after 2M steps, a small amount of environment interactions are allowed with the task-specific rewards (100K steps).
We then evaluate how efficiently each method adapts to the given downstream tasks, specifically examining how well the pre-trained policies serve as a good initialization for rapid adaptation.
We compared SEMDICE with the baselines provided by URLB. We follow the experimental protocoal of URLB: we ran 10 seeds per task, and used the same hyperparameters for baselines, except for using the smaller MLP hidden size $1024 \rightarrow 256$ for fast training/evaluation.

\paragraph{Results}
In Figure~\ref{fig:entropy_whole}, we report the state entropy of the policy checkpoints during pre-training per method, and SEMDICE shows the highest state entropy estimates across all domains.
In Figure~\ref{fig:urlb}, we present the fine-tuning performance. Overall, SEMDICE significantly outperforms all data-based (i.e. SEM-based) and knowledge-based baselines in terms of rapid adaptation performance during fine-tuning phase.
This highlights that state entropy maximization is a desirable objective for RL pre-training, as illustrated in Appendix~\ref{appendix:why_sem}.
Although SEMDICE underperforms CIC, a competence-based URL method, we believe its performance can be further enhanced by integrating representation learning techniques (e.g., those used in CIC). It is important to emphasize that our results were achieved solely through the SEM objective without additional mechanisms.
Moreover, SEMDICE’s potentially more unbiased and efficient SEM optimization could have resulted in good policy initialization for fine-tuning. Qualitatively, we observed that even the data-based baselines sometimes exhibit static behavior, while SEMDICE's learned policy consistently showed actively moving behavior across different domains.


\section{Conclusion}
We presented SEMDICE, a new state-entropy-maximization (SEMDICE) algorithm for RL pre-training. 
Existing SEM methods rely on policy gradients, and they either require on-policy samples or perform (off-policy) biased optimization due to its estimation of state entropy of replay buffer.
In contrast, SEMDICE directly optimizes in the space of stationary distributions, and our derivation shows that computing an optimal SEM policy can be achieved by solving a single convex minimization problem with arbitrary off-policy dataset. To the bast of our knowledge, SEMDICE is the first principled off-policy SEM algorithm.
Through various tabular and continuous domain experiments, we demonstrated that SEMDICE converges to a single stationary Markov SEM policy, and outperforms baselines in terms of maximizing state entropy.
As for future work, we plan to incorporate representation learning components into SEMDICE so that it enables faster meaningful feature detection and better navigation in high-dimensional domains such as pixel-based domains. Another promising future direction is to devise a DICE-based method for competence-based unsupervised RL algorithms (e.g. maximizing the mutual information), which can serve as a principled off-policy algorithm for it.


\clearpage


\section*{Acknowledgments}
This work was supported by NSF AI4OPT AI Centre.
Pieter Abbeel holds concurrent appointments as a Professor at UC Berkeley and as an Amazon Scholar. This paper describes work performed at UC Berkeley and is not associated with Amazon.
This work was partly supported by Institute of Information \& communications Technology Planning \& Evaluation (IITP) grant funded by the Korea government (MSIT) (No. RS-2024-00457882, AI Research Hub Project) and the Institute of Information \& Communications Technology Planning \& Evaluation (IITP) grant (RS-2020-II201361, Artificial Intelligence Graduate School Program (Yonsei University).

\bibliography{iclr2025_conference}

@inproceedings{
liu2021behavior,
    title={Behavior From the Void: Unsupervised Active Pre-Training},
    author={Hao Liu and Pieter Abbeel},
    booktitle={Advances in Neural Information Processing Systems},
    year={2021},
}

@inproceedings{eysenbach2022information,
    title={The Information Geometry of Unsupervised Reinforcement Learning},
    author={Benjamin Eysenbach and Ruslan Salakhutdinov and Sergey Levine},
    booktitle={International Conference on Learning Representations},
    year={2022},
}

@inproceedings{lee2021optidice,
    author    = {Jongmin Lee and Wonseok Jeon and Byung-Jun Lee and Joelle Pineau and Kee-Eung Kim},
    title     = {OptiDICE: Offline Policy Optimization via Stationary Distribution Correction Estimation},
    booktitle = {Proceedings of the 38th International Conference on Machine Learning (ICML)},
    year      = {2021}
}

@InProceedings{yarats2021protorl,
    title = 	 {Reinforcement Learning with Prototypical Representations},
    author =       {Yarats, Denis and Fergus, Rob and Lazaric, Alessandro and Pinto, Lerrel},
    booktitle = 	 {Proceedings of the 38th International Conference on Machine Learning},
    pages = 	 {11920--11931},
    year = 	 {2021},
    volume = 	 {139},
    series = 	 {Proceedings of Machine Learning Research},
    month = 	 {18--24 Jul},
    publisher =    {PMLR},
}

@inproceedings{
laskin2021urlb,
title={{URLB}: Unsupervised Reinforcement Learning Benchmark},
author={Michael Laskin and Denis Yarats and Hao Liu and Kimin Lee and Albert Zhan and Kevin Lu and Catherine Cang and Lerrel Pinto and Pieter Abbeel},
booktitle={Thirty-fifth Conference on Neural Information Processing Systems Datasets and Benchmarks Track (Round 2)},
year={2021},
}

@InProceedings{pathak2017icm,
  title = 	 {Curiosity-driven Exploration by Self-supervised Prediction},
  author =       {Deepak Pathak and Pulkit Agrawal and Alexei A. Efros and Trevor Darrell},
  booktitle = 	 {Proceedings of the 34th International Conference on Machine Learning},
  pages = 	 {2778--2787},
  year = 	 {2017},
  volume = 	 {70},
  series = 	 {Proceedings of Machine Learning Research},
  month = 	 {06--11 Aug},
}

@inproceedings{
burda2018rnd,
title={Exploration by random network distillation},
author={Yuri Burda and Harrison Edwards and Amos Storkey and Oleg Klimov},
booktitle={International Conference on Learning Representations},
year={2019},
}

@misc{
lee2020smm,
title={Efficient Exploration via State Marginal Matching},
author={Lisa Lee and Benjain Eysenbach and Emilio Parisotto and Erix Xing and Sergey Levine and Ruslan Salakhutdinov},
year={2020},
}

@inproceedings{
eysenbach2018diayn,
title={Diversity is All You Need: Learning Skills without a Reward Function},
author={Benjamin Eysenbach and Abhishek Gupta and Julian Ibarz and Sergey Levine},
booktitle={International Conference on Learning Representations},
year={2019},
}

@InProceedings{liu2021aps,
  title = 	 {APS: Active Pretraining with Successor Features},
  author =       {Liu, Hao and Abbeel, Pieter},
  booktitle = 	 {Proceedings of the 38th International Conference on Machine Learning},
  pages = 	 {6736--6747},
  year = 	 {2021},
  volume = 	 {139},
  series = 	 {Proceedings of Machine Learning Research},
  month = 	 {18--24 Jul},
}

@inproceedings{laskin2022cic,
title={Unsupervised Reinforcement Learning with Contrastive Intrinsic Control},
author={Michael Laskin and Hao Liu and Xue Bin Peng and Denis Yarats and Aravind Rajeswaran and Pieter Abbeel},
booktitle={Advances in Neural Information Processing Systems},
year={2022},
}

@InProceedings{Pathak2019disagreement,
  title = 	 {Self-Supervised Exploration via Disagreement},
  author =       {Pathak, Deepak and Gandhi, Dhiraj and Gupta, Abhinav},
  booktitle = 	 {Proceedings of the 36th International Conference on Machine Learning},
  pages = 	 {5062--5071},
  year = 	 {2019},
  volume = 	 {97},
  series = 	 {Proceedings of Machine Learning Research},
  month = 	 {09--15 Jun},
}

@inproceedings{
kim2022lobsdice,
title={Lobs{DICE}: Offline Learning from Observation via Stationary Distribution Correction Estimation},
author={Geon-Hyeong Kim and Jongmin Lee and Youngsoo Jang and Hongseok Yang and Kee-Eung Kim},
booktitle={Advances in Neural Information Processing Systems},
year={2022},
}

@inproceedings{
kim2022demodice,
title={Demo{DICE}: Offline Imitation Learning with Supplementary Imperfect Demonstrations},
author={Geon-Hyeong Kim and Seokin Seo and Jongmin Lee and Wonseok Jeon and HyeongJoo Hwang and Hongseok Yang and Kee-Eung Kim},
booktitle={International Conference on Learning Representations},
year={2022},
}

@misc{
nachum2019dualdice,
title={Dual{DICE}: Efficient Estimation of Off-Policy Stationary Distribution Corrections},
author={Ofir Nachum and Yinlam Chow and Bo Dai and Lihong Li},
year={2019},
}

@inproceedings{Yang2020offpolicy,
 author = {Yang, Mengjiao and Nachum, Ofir and Dai, Bo and Li, Lihong and Schuurmans, Dale},
 booktitle = {Advances in Neural Information Processing Systems},
 pages = {6551--6561},
 publisher = {Curran Associates, Inc.},
 title = {Off-Policy Evaluation via the Regularized Lagrangian},
 volume = {33},
 year = {2020}
}

@inproceedings{Zhang2020GenDICE,
title={GenDICE: Generalized Offline Estimation of Stationary Values},
author={Ruiyi Zhang* and Bo Dai* and Lihong Li and Dale Schuurmans},
booktitle={International Conference on Learning Representations},
year={2020},
}

@inproceedings{Zhang2020GradientDICE,
  title = {GradientDICE: Rethinking Generalized Offline Estimation of Stationary Values},
  author = {Shangtong Zhang and Bo Liu and Shimon Whiteson},
  year = {2020},
  cites = {0},
  citedby = {0},
  pages = {11194-11203},
  booktitle = {Proceedings of the 37th International Conference on Machine Learning, ICML 2020, 13-18 July 2020, Virtual Event},
  volume = {119},
  series = {Proceedings of Machine Learning Research},
  publisher = {PMLR},
}

@inproceedings{lee2022coptidice,
title={{CO}pti{DICE}: Offline Constrained Reinforcement Learning via Stationary Distribution Correction Estimation},
author={Jongmin Lee and Cosmin Paduraru and Daniel J Mankowitz and Nicolas Heess and Doina Precup and Kee-Eung Kim and Arthur Guez},
booktitle={International Conference on Learning Representations},
year={2022},
}

@article{mnih2015humanlevel,
	author = {Mnih, Volodymyr and Kavukcuoglu, Koray and Silver, David and Rusu, Andrei A. and Veness, Joel and Bellemare, Marc G. and Graves, Alex and Riedmiller, Martin and Fidjeland, Andreas K. and Ostrovski, Georg and Petersen, Stig and Beattie, Charles and Sadik, Amir and Antonoglou, Ioannis and King, Helen and Kumaran, Dharshan and Wierstra, Daan and Legg, Shane and Hassabis, Demis},
	journal = {Nature},
	number = 7540,
	pages = {529--533},
	title = {Human-level control through deep reinforcement learning},
	volume = 518,
	year = 2015
}

@article{silver2017mastering,
	author = {Silver, David and Schrittwieser, Julian and Simonyan, Karen and Antonoglou, Ioannis and Huang, Aja and Guez, Arthur and Hubert, Thomas and Baker, Lucas and Lai, Matthew and Bolton, Adrian and Chen, Yutian and Lillicrap, Timothy and Hui, Fan and Sifre, Laurent and van den Driessche, George and Graepel, Thore and Hassabis, Demis},
	journal = {Nature},
	pages = {354-359},
	title = {Mastering the game of {G}o without human knowledge},
	volume = 550,
	year = 2017
}

@inproceedings{lillicrap2015,
	author    = {Timothy P. Lillicrap and
	Jonathan J. Hunt and
	Alexander Pritzel and
	Nicolas Heess and
	Tom Erez and
	Yuval Tassa and
	David Silver and
	Daan Wierstra},
	title     = {Continuous control with deep reinforcement learning},
	booktitle = {4th International Conference on Learning Representations, {ICLR}},
	year      = {2016}
}

@inproceedings{haarnoja2018soft,
	title = {Soft actor-critic: Off-policy maximum entropy deep reinforcement learning with a stochastic actor},
	author = {Haarnoja, Tuomas and Zhou, Aurick and Abbeel, Pieter and Levine, Sergey},
	booktitle = {Proceedings of the 35th International Conference on Machine Learning (ICML)},
	year = {2018}
}

@book{sutton1998rlbook,
	title={Reinforcement learning: {A}n introduction},
	author={Sutton, Richard S. and Barto, Andrew G.},
	publisher={MIT Press},
	year={1998}
}

@inproceedings{he2020moco,
  author={He, Kaiming and Fan, Haoqi and Wu, Yuxin and Xie, Saining and Girshick, Ross},
  booktitle={2020 IEEE/CVF Conference on Computer Vision and Pattern Recognition (CVPR)}, 
  title={Momentum Contrast for Unsupervised Visual Representation Learning}, 
  year={2020},
  volume={},
  number={},
  pages={9726-9735},
}

@InProceedings{chen2020simclr,
  title = 	 {A Simple Framework for Contrastive Learning of Visual Representations},
  author =       {Chen, Ting and Kornblith, Simon and Norouzi, Mohammad and Hinton, Geoffrey},
  booktitle = 	 {Proceedings of the 37th International Conference on Machine Learning},
  pages = 	 {1597--1607},
  year = 	 {2020},
  volume = 	 {119},
  series = 	 {Proceedings of Machine Learning Research},
  month = 	 {13--18 Jul},
  publisher =    {PMLR},
}

@inproceedings{devlin2019bert,
    title = "{BERT}: Pre-training of Deep Bidirectional Transformers for Language Understanding",
    author = "Devlin, Jacob  and
      Chang, Ming-Wei  and
      Lee, Kenton  and
      Toutanova, Kristina",
    booktitle = "Proceedings of the 2019 Conference of the North {A}merican Chapter of the Association for Computational Linguistics: Human Language Technologies, Volume 1 (Long and Short Papers)",
    month = jun,
    year = "2019",
    publisher = "Association for Computational Linguistics",
    pages = "4171--4186",
}

@inproceedings{brown2020gpt3,
 author = {Brown, Tom and Mann, Benjamin and Ryder, Nick and Subbiah, Melanie and Kaplan, Jared D and Dhariwal, Prafulla and Neelakantan, Arvind and Shyam, Pranav and Sastry, Girish and Askell, Amanda and Agarwal, Sandhini and Herbert-Voss, Ariel and Krueger, Gretchen and Henighan, Tom and Child, Rewon and Ramesh, Aditya and Ziegler, Daniel and Wu, Jeffrey and Winter, Clemens and Hesse, Chris and Chen, Mark and Sigler, Eric and Litwin, Mateusz and Gray, Scott and Chess, Benjamin and Clark, Jack and Berner, Christopher and McCandlish, Sam and Radford, Alec and Sutskever, Ilya and Amodei, Dario},
 booktitle = {Advances in Neural Information Processing Systems},
 pages = {1877--1901},
 title = {Language Models are Few-Shot Learners},
 volume = {33},
 year = {2020}
}

@inproceedings{
Gleave2020Adversarial,
title={Adversarial Policies: Attacking Deep Reinforcement Learning},
author={Adam Gleave and Michael Dennis and Cody Wild and Neel Kant and Sergey Levine and Stuart Russell},
booktitle={International Conference on Learning Representations},
year={2020},
}

@InProceedings{cobbe2020leveraging,
  title = 	 {Leveraging Procedural Generation to Benchmark Reinforcement Learning},
  author =       {Cobbe, Karl and Hesse, Chris and Hilton, Jacob and Schulman, John},
  booktitle = 	 {Proceedings of the 37th International Conference on Machine Learning},
  pages = 	 {2048--2056},
  year = 	 {2020},
  volume = 	 {119},
  series = 	 {Proceedings of Machine Learning Research},
  month = 	 {13--18 Jul},
  publisher =    {PMLR},
}

@InProceedings{hazan2019provably,
  title = 	 {Provably Efficient Maximum Entropy Exploration},
  author =       {Hazan, Elad and Kakade, Sham and Singh, Karan and Van Soest, Abby},
  booktitle = 	 {Proceedings of the 36th International Conference on Machine Learning},
  pages = 	 {2681--2691},
  year = 	 {2019},
  volume = 	 {97},
  series = 	 {Proceedings of Machine Learning Research},
  month = 	 {09--15 Jun},
  publisher =    {PMLR},
}

@article{Singh2003NearestNE,
  title={Nearest Neighbor Estimates of Entropy},
  author={Harshinder Singh and Neeraj Misra and Vladimir Hnizdo and Adam Fedorowicz and Eugene Demchuk},
  journal={American Journal of Mathematical and Management Sciences},
  year={2003},
  volume={23},
  pages={301 - 321},
}

@inproceedings{Agarwal2021Rliable,
 author = {Agarwal, Rishabh and Schwarzer, Max and Castro, Pablo Samuel and Courville, Aaron C and Bellemare, Marc},
 booktitle = {Advances in Neural Information Processing Systems},
 pages = {29304--29320},
 publisher = {Curran Associates, Inc.},
 title = {Deep Reinforcement Learning at the Edge of the Statistical Precipice},
 volume = {34},
 year = {2021}
}

@book{boyd2004convex,
  title={Convex optimization},
  author={Boyd, Stephen and Boyd, Stephen P and Vandenberghe, Lieven},
  year={2004},
  publisher={Cambridge university press}
}

@misc{towers_gymnasium_2023,
        title = {Gymnasium},
        abstract = {An API standard for single-agent reinforcement learning environments, with popular reference environments and related utilities (formerly Gym)},
        urldate = {2023-07-08},
        publisher = {Zenodo},
        author = {Towers, Mark and Terry, Jordan K. and Kwiatkowski, Ariel and Balis, John U. and Cola, Gianluca de and Deleu, Tristan and Goulão, Manuel and Kallinteris, Andreas and KG, Arjun and Krimmel, Markus and Perez-Vicente, Rodrigo and Pierré, Andrea and Schulhoff, Sander and Tai, Jun Jet and Shen, Andrew Tan Jin and Younis, Omar G.},
        month = mar,
        year = {2023},
        doi = {10.5281/zenodo.8127026},
}

@book{Puterman_textbook,
author = {Puterman, Martin L.},
title = {Markov Decision Processes: Discrete Stochastic Dynamic Programming},
year = {1994},
isbn = {0471619779},
publisher = {John Wiley \& Sons, Inc.},
address = {USA},
edition = {1st},
}

@inproceedings{mutti2021task,
  title={Task-agnostic exploration via policy gradient of a non-parametric state entropy estimate},
  author={Mutti, Mirco and Pratissoli, Lorenzo and Restelli, Marcello},
  booktitle={Proceedings of the AAAI Conference on Artificial Intelligence},
  volume={35},
  pages={9028--9036},
  year={2021}
}

@inproceedings{mutti2022importance,
  title={The importance of non-markovianity in maximum state entropy exploration},
  author={Mutti, Mirco and De Santi, Riccardo and Restelli, Marcello},
  booktitle={International Conference on Machine Learning},
  pages={16223--16239},
  year={2022},
  organization={PMLR}
}

@inproceedings{
jain2023maximum,
title={Maximum State Entropy Exploration using Predecessor and Successor Representations},
author={Arnav Kumar Jain and Lucas Lehnert and Irina Rish and Glen Berseth},
booktitle={Thirty-seventh Conference on Neural Information Processing Systems},
year={2023},
}

@misc{nachum2020reinforcement,
      title={Reinforcement Learning via Fenchel-Rockafellar Duality}, 
      author={Ofir Nachum and Bo Dai},
      year={2020},
      eprint={2001.01866},
      archivePrefix={arXiv},
      primaryClass={cs.LG}
}

@article{liu2018breaking,
  title={Breaking the curse of horizon: Infinite-horizon off-policy estimation},
  author={Liu, Qiang and Li, Lihong and Tang, Ziyang and Zhou, Dengyong},
  journal={Advances in neural information processing systems},
  volume={31},
  year={2018}
}

@InProceedings{tiapkin2023fast,
  title = 	 {Fast Rates for Maximum Entropy Exploration},
  author =       {Tiapkin, Daniil and Belomestny, Denis and Calandriello, Daniele and Moulines, Eric and Munos, Remi and Naumov, Alexey and Perrault, Pierre and Tang, Yunhao and Valko, Michal and Menard, Pierre},
  booktitle = 	 {Proceedings of the 40th International Conference on Machine Learning},
  pages = 	 {34161--34221},
  year = 	 {2023},
  volume = 	 {202},
  series = 	 {Proceedings of Machine Learning Research},
  month = 	 {23--29 Jul},
  publisher =    {PMLR},
}

@inproceedings{
kim2023accelerating,
title={Accelerating Reinforcement Learning with Value-Conditional State Entropy Exploration},
author={Dongyoung Kim and Jinwoo Shin and Pieter Abbeel and Younggyo Seo},
booktitle={Thirty-seventh Conference on Neural Information Processing Systems},
year={2023},
}

@article{yang2023CEM,
title={CEM: Constrained Entropy Maximization for Task-Agnostic Safe Exploration},
volume={37},
number={9},
journal={Proceedings of the AAAI Conference on Artificial Intelligence},
author={Yang, Qisong and Spaan, Matthijs T.J.},
year={2023},
month={Jun.},
pages={10798-10806}
}

@article{kim2024poreldice,
title={Relaxed Stationary Distribution Correction Estimation for Improved Offline Policy Optimization},
volume={38}, 
number={12},
journal={Proceedings of the AAAI Conference on Artificial Intelligence}, 
author={Kim, Woosung and Ki, Donghyeon and Lee, Byung-Jun},
year={2024}, 
month={Mar.}, 
pages={13185-13192}
}

@misc{sikchi2024dualrlunificationnew,
      title={Dual RL: Unification and New Methods for Reinforcement and Imitation Learning}, 
      author={Harshit Sikchi and Qinqing Zheng and Amy Zhang and Scott Niekum},
      year={2024},
      eprint={2302.08560},
      archivePrefix={arXiv},
      primaryClass={cs.LG},
      url={https://arxiv.org/abs/2302.08560}, 
}

@inproceedings{
mao2024odice,
title={{ODICE}: Revealing the Mystery of Distribution Correction Estimation via Orthogonal-gradient Update},
author={Liyuan Mao and Haoran Xu and Weinan Zhang and Xianyuan Zhan},
booktitle={The Twelfth International Conference on Learning Representations},
year={2024},
}

@misc{ma2022versatileofflineimitationobservations,
      title={Versatile Offline Imitation from Observations and Examples via Regularized State-Occupancy Matching}, 
      author={Yecheng Jason Ma and Andrew Shen and Dinesh Jayaraman and Osbert Bastani},
      year={2022},
      eprint={2202.02433},
      archivePrefix={arXiv},
      primaryClass={cs.LG},
      url={https://arxiv.org/abs/2202.02433}, 
}

@InProceedings{jin2020rewardfree,
  title = 	 {Reward-Free Exploration for Reinforcement Learning},
  author =       {Jin, Chi and Krishnamurthy, Akshay and Simchowitz, Max and Yu, Tiancheng},
  booktitle = 	 {Proceedings of the 37th International Conference on Machine Learning},
  pages = 	 {4870--4879},
  year = 	 {2020},
  volume = 	 {119},
  series = 	 {Proceedings of Machine Learning Research},
  month = 	 {13--18 Jul},
  publisher =    {PMLR},
}

@misc{tarbouriech2020activemodelestimationmarkov,
      title={Active Model Estimation in Markov Decision Processes}, 
      author={Jean Tarbouriech and Shubhanshu Shekhar and Matteo Pirotta and Mohammad Ghavamzadeh and Alessandro Lazaric},
      year={2020},
      eprint={2003.03297},
      archivePrefix={arXiv},
      primaryClass={stat.ML},
      url={https://arxiv.org/abs/2003.03297}, 
}

@misc{kaufmann2020adaptiverewardfreeexploration,
      title={Adaptive Reward-Free Exploration}, 
      author={Emilie Kaufmann and Pierre Ménard and Omar Darwiche Domingues and Anders Jonsson and Edouard Leurent and Michal Valko},
      year={2020},
      eprint={2006.06294},
      archivePrefix={arXiv},
      primaryClass={cs.LG},
      url={https://arxiv.org/abs/2006.06294}, 
}

@InProceedings{schulman2015trpo,
  title = 	 {Trust Region Policy Optimization},
  author = 	 {Schulman, John and Levine, Sergey and Abbeel, Pieter and Jordan, Michael and Moritz, Philipp},
  booktitle = 	 {Proceedings of the 32nd International Conference on Machine Learning},
  pages = 	 {1889--1897},
  year = 	 {2015},
  volume = 	 {37},
  series = 	 {Proceedings of Machine Learning Research},
  address = 	 {Lille, France},
  month = 	 {07--09 Jul},
  publisher =    {PMLR},
}

@misc{nachum2019algaedicepolicygradientarbitrary,
      title={AlgaeDICE: Policy Gradient from Arbitrary Experience}, 
      author={Ofir Nachum and Bo Dai and Ilya Kostrikov and Yinlam Chow and Lihong Li and Dale Schuurmans},
      year={2019},
      eprint={1912.02074},
      archivePrefix={arXiv},
      primaryClass={cs.LG},
      url={https://arxiv.org/abs/1912.02074}, 
}

@misc{geist2019theoryregularizedmarkovdecision,
      title={A Theory of Regularized Markov Decision Processes}, 
      author={Matthieu Geist and Bruno Scherrer and Olivier Pietquin},
      year={2019},
      eprint={1901.11275},
      archivePrefix={arXiv},
      primaryClass={cs.LG},
      url={https://arxiv.org/abs/1901.11275}, 
}
\bibliographystyle{iclr2025_conference}

\clearpage
\appendix
\section{Why State-entropy Maximization for RL Pre-training}
\label{appendix:why_sem}

In this section, we illustrate why state-entropy maximization can be a good choice for RL pre-training.
Consider the following information-theoretic \emph{regularized regret objective} defined by~\citep{eysenbach2022information}.
{\small \begin{align}
    &\textsc{AdaptationObjective} \big( \bar d_{\mathrm{pre}}(s), r(s) \big) 
    := \min_{\bar d^*(s)} \underbrace{\max_{ \bar d^+(s) } \E_{\bar d^+(s)}[ r(s) ] - \E_{\bar d^*(s)}[r(s)]}_{\mathrm{regret}} + \Dkl(\bar d^*(s) || \bar d_{\mathrm{pre}}(s)) \label{eq:adaptation_objective}
\end{align}}%
In the {\small $\textsc{AdaptationObjective}$}, for a given pre-trained policy's stationary distribution $\bar d_\mathrm{pre}$ and a downstream reward function $r$, it measures the regret of the policy ($\bar d^*$) after adaptation from the pre-trained policy ($\bar d_\mathrm{pre}$) with the information cost, where optimal $\bar d^+(s)$ denotes the state stationary distribution of the optimal policy for the reward $r$ (introduced to define regret).
Then, a desirable pre-trained policy (or its corresponding $\bar d^\pi$) can be defined in terms of minimizing the (regularized) regret against the \emph{worst-case} reward function:
\begin{align}
    \min_{\bar d_\mathrm{pre}(s)} \max_{r(s)} \textsc{AdaptationObjective} \big( \bar d_{\mathrm{pre}}(s), r(s) \big) \label{eq:minmax_adaptation_objective}
\end{align}
\citep{eysenbach2022information} has shown that \eqref{eq:minmax_adaptation_objective} is equivalent to:
\begin{align}
    \min_{\bar d_\mathrm{pre}(s)} \max_{\bar d^+(s)} \Dkl( \bar d^+(s) || \bar d_\mathrm{pre}(s) ) \label{eq:minmax_adaptation_objective_kl},
\end{align}%
and thus maximum-entropy (uniform) $\bar d_\mathrm{pre}$ is the solution of \eqref{eq:minmax_adaptation_objective_kl} and \eqref{eq:minmax_adaptation_objective}. 
This means that SEM policy provides robust policy initialization against the worst-case reward function for fine-tuning, justifying why state-entropy maximization is useful for unsupervised RL pre-training.
Figure~\ref{fig:state_stationary_geometry} visualizes information geometry of RL pre-training~\citep{eysenbach2022information}.

\begin{figure}[h]
    \centering
    \includegraphics[width=0.4\columnwidth]{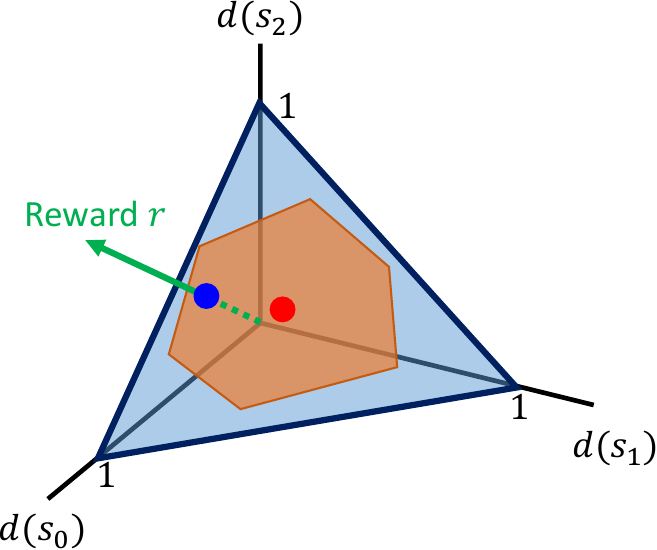}
    \caption{Visualization of reward function and policies as their state stationary distributions in a 3-state MDP. The shaded orange area denotes a set of achievable state stationary distributions that lie on the probability simplex $\Delta(S)$ (shaded blue). The green arrow represents the reward function $r$ as a vector starting at the origin. \red{\CIRCLE} denotes the state stationary distribution of a pretrained SEM policy, $\bar d_\mathrm{pre} = [\frac{1}{3}, \frac{1}{3}, \frac{1}{3}]$, and \blue{\CIRCLE} denotes the state stationary distribution of an optimal policy for $r$, $\bar d^+$ which is an intersection of $r$ and $\Delta(S)$. The distance from \red{\CIRCLE} to \blue{\CIRCLE} reflects the adaptation cost. }
    \label{fig:state_stationary_geometry}
\end{figure}

\clearpage
\section{\texorpdfstring{Why both $d(s,a)$ and $\bar d(s)$ are used}{Why both d(s,a) and bar d(s) are used}}
\label{appendix:why_both_d_d_bar}

Without $\bar d(s)$, the main objective becomes: 
\begin{align}
    -\sum_{s,a} d(s,a) \log \textcolor{red}{\sum_{a'} d(s,a')}
\end{align}
This makes algorithm derivation complicated due to the summation inside logarithm. Specifically, our derivation with $\bar d(s)$ results in the following inner-max problem (See \eqref{eq:min_max_w} in Appendix~\ref{appendix:proof}): $$ \max_{w \ge 0} \E_{d^D}\Big[ w(s,a) \big( \mu(s) + \gamma \E_{s'}[\nu(s')] - \nu(s) \big) - \alpha f \big( w(s,a) \big) \Big] = \alpha f_+^* \big( \tfrac{1}{\alpha} \big( \mu(s) + \gamma \E_{s'}[\nu(s')] - \nu(s) \big) $$ where inner-maximization problem $\max_{w \ge 0}$ could have been easily eliminated by applying Fenchel conjugate of $f$, i.e. $f_+^*(y) := \max_{x \ge 0} xy - f(x)$.

In contrast, without $\bar d(s)$, we have the following inner-maximization problem: $$ \max_{w \ge 0} \E_{(s,a)\sim d^D} \Big[ w(s,a) \Big( - \log \textcolor{red}{\sum_{a'} w(s,a')} d^D(s,a') + \gamma \E_{s'}[\nu(s')] - \nu(s) \Big) - \alpha f \big( w(s,a) \big) \Big] $$ where Fenchel conjugate is not directly applicable to eliminate this inner-maximization problem.

To sum up, the introduction of the new optimization variable $\bar d(s)$ eliminates the need for solving nested min-max optimization, enabling SEM to be formulated as solving a single convex optimization problem.

\section{Proofs}
\label{appendix:proof}

\subsection{Proof of Proposition ~\ref{prop:sem_policy_property}}

\begin{figure}[h!]
    \centering
    \includegraphics[width=0.4\linewidth]{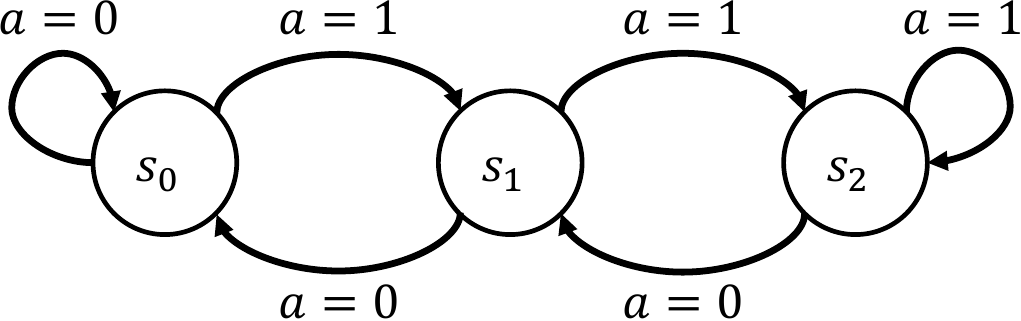}
    \caption{An illustrative example of MDP with three states and two actions, where the optimal SEM policy is stochastic.
    Specifically, in order to maximize the state entropy, the action selection should be randomized at $s_1$.
If the agent deterministically takes action $a=0$ at $s_1$, it precludes visiting $s_2$. Conversely, if the agent deterministically takes action $a=1$  at $s_1$, it will never visit $s_0$ thereafter. 
    }
    \label{fig:example_mdp}
\end{figure}

\optimalsempolicy*
\begin{lemma} \citep{Puterman_textbook}
    For any possibly non-Markovian policy $\pi$, define a stationary Markov policy $\pi'$ as $\pi'(a|s) = \frac{d_\pi(s,a}{d_\pi(s,a)}.$ Then, $d_\pi = d_{\pi'}$.
    \label{lemma:policy_transformation}
\end{lemma}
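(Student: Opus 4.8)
The plan is to prove the lemma by exploiting uniqueness of the solution to the Bellman flow equation \eqref{eq:cp_bellman_flow_constraint} for a fixed stationary Markov policy. I first read the definition of $\pi'$ as $\pi'(a|s) = d_\pi(s,a)/\bar d_\pi(s)$, i.e.\ the state-action occupancy normalized by its state marginal $\bar d_\pi(s) = \sum_{a'} d_\pi(s,a')$ (with $\pi'(\cdot|s)$ defined arbitrarily on states where $\bar d_\pi(s)=0$). The strategy has three parts: (i) show that the occupancy measure $d_\pi$ of the arbitrary, possibly non-Markovian policy $\pi$ satisfies the flow equation; (ii) show that this forces its state marginal $\bar d_\pi$ to satisfy the \emph{same} linear fixed-point equation that characterizes $\bar d_{\pi'}$; and (iii) conclude by uniqueness that the two occupancy measures coincide.

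For part (i) I would verify directly from the definition $d_\pi(s,a) = (1-\gamma)\sum_{t\ge 0}\gamma^t \Pr(s_t=s,a_t=a)$ that $d_\pi$ satisfies \eqref{eq:cp_bellman_flow_constraint}. The computation splits off the $t=0$ term to produce $(1-\gamma)p_0(s')$ and re-indexes the tail using the identity $\Pr(s_{t+1}=s') = \sum_{s,a}\Pr(s_t=s,a_t=a)\,T(s'|s,a)$. The crucial observation is that this identity relies only on the Markov property of the transition kernel $T$ and holds regardless of whether $\pi$ itself is Markovian; this is precisely what allows an arbitrary history-dependent policy to still induce a valid flow-satisfying occupancy measure, and it is the conceptual heart of the lemma.

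For parts (ii) and (iii), note that by construction $d_\pi(s,a) = \bar d_\pi(s)\,\pi'(a|s)$ for every $(s,a)$, including at states with $\bar d_\pi(s)=0$, where $d_\pi(s,a)=0$ makes the identity hold trivially. Substituting this factorization into the flow equation from part (i) collapses the action sum into the induced state-transition kernel $P^{\pi'}(s'|s)=\sum_a \pi'(a|s)\,T(s'|s,a)$, yielding $\bar d_\pi(s') = (1-\gamma)p_0(s') + \gamma\sum_s \bar d_\pi(s)\,P^{\pi'}(s'|s)$, which is exactly the fixed-point equation defining $\bar d_{\pi'}$. For $\gamma<1$ the map $\bar d \mapsto (1-\gamma)p_0 + \gamma (P^{\pi'})^\top \bar d$ is a $\gamma$-contraction (equivalently $I-\gamma (P^{\pi'})^\top$ is invertible since $P^{\pi'}$ is row-stochastic), so its fixed point is unique. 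Hence $\bar d_\pi = \bar d_{\pi'}$, and finally $d_{\pi'}(s,a) = \bar d_{\pi'}(s)\,\pi'(a|s) = \bar d_\pi(s)\,\pi'(a|s) = d_\pi(s,a)$, establishing $d_\pi = d_{\pi'}$.

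The main obstacle is the uniqueness claim, together with making part (i) airtight for non-Markovian $\pi$: I must ensure the flow equation genuinely holds without any Markov assumption on the policy, since that is the whole content of the lemma. I would also flag that for the undiscounted case $\gamma=1$ the contraction argument breaks down, and uniqueness of the stationary state distribution would instead require an ergodicity or irreducibility assumption; consistent with the main text, I would state the result for $\gamma<1$ and defer the $\gamma=1$ case to the undiscounted treatment in the appendix.
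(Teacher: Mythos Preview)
The paper does not actually prove this lemma: it is stated with a citation to \citep{Puterman_textbook} and immediately used, without proof, in the subsequent \texttt{proof} environment (which is the proof of Proposition~\ref{prop:sem_policy_property}, not of the lemma). So there is no ``paper's own proof'' to compare against.

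Your argument is correct and is essentially the standard textbook route: (i) any occupancy measure, Markovian policy or not, satisfies the Bellman flow equation because the one-step marginalization $\Pr(s_{t+1}=s')=\sum_{s,a}\Pr(s_t=s,a_t=a)T(s'\mid s,a)$ only uses the Markov property of $T$; (ii) the factorization $d_\pi(s,a)=\bar d_\pi(s)\pi'(a\mid s)$ turns the flow equation into the linear fixed-point equation for the state marginal under $\pi'$; (iii) for $\gamma<1$ that equation has a unique solution, forcing $\bar d_\pi=\bar d_{\pi'}$ and hence $d_\pi=d_{\pi'}$. Your handling of the zero-mass states and your caveat about $\gamma=1$ requiring an ergodicity assumption are both appropriate. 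This is exactly the kind of self-contained proof one would lift from Puterman; nothing is missing.
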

\begin{proof}
    Firstly, it is evident that there must exist at least one non-stationary, non-Markovian policy capable of maximizing the entropy of the state’s stationary distribution. Then, according to Lemma~\ref{lemma:policy_transformation}, we can always obtain its corresponding stationary and Markov policy. This derived policy is generally stochastic, as any deterministic policy can be arbitrarily bad in terms of maximization of state entropy, as exemplified in Figure~\ref{fig:example_mdp}.
\end{proof}

\subsection{Proof of Theorem~\ref{thm:remove_inner_max}}

\removeinnermax* 

\begin{proof}
Start with Equation \eqref{eq:minmax_lagrangian}
{\small \begin{align}
    &\textstyle \min\limits_{\nu, \mu} \max\limits_{\bar{d}, d \ge 0}
    \textstyle(1 - \gamma) \E_{s_0 \sim p_0}[ \nu(s_0) ] - \alpha \E_{d^D}\Big[ f \big( \underbrace{\tfrac{d(s,a)}{d^D(s,a)}}_{=: w(s,a)} \big) \Big] + \sum\limits_{s,a} d(s,a) e_{\nu, \mu}(s,a) - \sum\limits_{s} \bar{d}(s) \Big( \mu(s) + \log \bar d(s) \Big) \nonumber \\
    =& \min_{\nu,\mu} \max_{\bar d, w \ge 0} (1 - \gamma) \E_{s_0} \big[  \nu(s_0) \big] - \alpha \E_{d^D}\Big[ f \big( w(s,a) \big) \Big] + \sum\limits_{s,a} d^D(s,a) w(s,a) e_{\nu, \mu}(s,a) - \sum\limits_{s} \bar{d}(s) \Big( \mu(s) + \log \bar d(s) \Big)  \label{eq:min_max_w}
\end{align}}
First, we can derive the closed-form solution for $\bar d$ in \eqref{eq:min_max_w}:
\begin{align}
    \frac{\partial L}{\partial \bar d(s)} &= -(\mu(s) + \log \bar d(s)) - 1 = 0 \\
    \Rightarrow  \bar d^*(s) &= \exp(-\mu(s) - 1)
\end{align}
Check the second order conditions to confirm that $L$ is concave with respect to $\bar d(s)$:
\begin{align}
    \frac{\partial^2 L}{\partial \bar d(s)^2} = -\frac{1}{\bar d(s)} < 0
\end{align}

We can plug this solution to $L$, which eliminates the $\max_{\bar d}$:
\begin{align}
    &\min_{\nu, \mu} \max_{w \ge 0} (1 - \gamma) \E_{s_0} \big[  \nu(s_0) \big] - \alpha \E_{d^D}\Big[ f \big( w(s,a) \big) - \tfrac{1}{\alpha} w(s,a) e_{\nu, \mu}(s,a) \Big]  + \sum_{s} \exp(-\mu(s) - 1) 
    \label{eq:minimax_numuw}
\end{align}

Consider simplifying the second term in \eqref{eq:minimax_numuw} using Fenchel conjugate:
\begin{align}
    &\max_{w \ge 0} - \alpha \E_{d^D}\Big[ f \big( w(s,a) \big) - \tfrac{1}{\alpha} w(s,a) e_{\nu, \mu}(s,a) \Big] \\
    &\max_{w \ge 0} \alpha \E_{d^D}\Big[ \tfrac{1}{\alpha} w(s,a) e_{\nu, \mu}(s,a) - f \big( w(s,a) \big) \Big] \\
    &=\alpha \E_{d^D}\Big[ \max_{w(s,a) \ge 0} w(s,a) \big( \tfrac{1}{\alpha} e_{\nu,\mu}(s,a) \big) -  f \big( w(s,a) \big) \Big] \\ 
    &= \alpha\E_{d^D} \big[  f_+^* \big( \tfrac{1}{\alpha} e_{\nu,\mu}(s,a) \big) \big]
\end{align}

Hence, the Equation \eqref{eq:minmax_lagrangian} is equivalent to:
\begin{align}
    &\min_{\nu, \mu} \textstyle (1 - \gamma) \E_{p_0} [ \nu(s_0) ] + \E_{d^D} \Big[ \alpha f_+^* \big( \tfrac{1}{\alpha} e_{\nu,\mu}(s,a) \big) \Big] + \sum\limits_{s} \exp(-\mu(s) - 1) =: L(\nu, \mu) \nonumber
\end{align}

Consider obtaining the closed-form solution for the inner-maximization for $w$ in \eqref{eq:minimax_numuw}:
\begin{align}
    \frac{\partial L}{\partial w(s,a)} &= -\alpha d^D(s, a)f'\big[ w(s, a) \big] + d^D(s,a) e_{\nu, \mu}(s,a) = 0 \\
    \Rightarrow w^*(s,a) &= f'^{-1}\Big( \tfrac{1}{\alpha} e_{\nu, \mu}(s,a) \Big)_+ 
    \label{eq:w_opt}
\end{align}
where $x_+ := \max(0, x)$.
\end{proof}
Again, we plug $w^*(s,a)$ into the original objective function, which results in the minimization problem:
\begin{align*}
    &\min_{\nu, \mu} (1 - \gamma) \E_{s_0} \big[ \nu(s_0) \big] - \alpha \E_{d^D}\Big[ f \Big( f'^{-1}\Big( \tfrac{e_{\nu, \mu}(s,a)}{\alpha} \Big)_+ \Big) - \tfrac{1}{\alpha}f'^{-1}\Big( \tfrac{e_{\nu, \mu}(s,a)}{\alpha} \Big)_+ e_{\nu, \mu}(s,a) \Big] \\
    &+ \sum_{s} \exp(-\mu(s) - 1) \\
\end{align*}
We can check that the function $L$ is convex with respect to both $\nu$ and $\mu$, by noting that $f_+^*(\cdot)$ is a convex function.

\subsection{Proof of Theorem~\ref{thm:L_L_tilde_relationship}}

\thmlogsumexp*
\begin{lemma}
    For any functions $\nu$ and $\mu$, and a constant $C$, the following equality holds:
    \begin{align}
        \wt L(\nu, \mu) = \wt L(\nu + \tfrac{C}{1 - \gamma}, \mu + C).
    \end{align}
    \label{lemma:nu_mu_constant_shift}
\end{lemma}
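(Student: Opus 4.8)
The plan is to prove the invariance by a direct term-by-term substitution into the definition of $\wt L$ in \eqref{eq:min_objective_logsumexp}, using the fact that $\wt L(\nu,\mu)$ splits into three independent pieces: the initial-state term $(1-\gamma)\E_{s_0}[\nu(s_0)]$, the conjugate term $\E_{d^D}[\alpha f_+^*(\tfrac{1}{\alpha}e_{\nu,\mu}(s,a))]$, and the log-sum-exp term $\log\sum_s\exp(-\mu(s))$. I would track how each piece changes under the replacement $\nu \mapsto \nu + \tfrac{C}{1-\gamma}$ and $\mu \mapsto \mu + C$, and show the net change is zero.

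First I would handle the middle term, the only one in which $\nu$ and $\mu$ appear together through $e_{\nu,\mu}(s,a) = \mu(s) + \gamma\E_{s'}[\nu(s')] - \nu(s)$. Substituting the shifted arguments gives $e_{\nu + \frac{C}{1-\gamma},\,\mu+C}(s,a) = (\mu(s)+C) + \gamma\E_{s'}[\nu(s')+\tfrac{C}{1-\gamma}] - (\nu(s)+\tfrac{C}{1-\gamma})$, so the constant contributions sum to $C + \tfrac{\gamma C}{1-\gamma} - \tfrac{C}{1-\gamma} = C + \tfrac{(\gamma-1)C}{1-\gamma} = 0$. Hence $e_{\nu,\mu}(s,a)$ is unchanged, and so is the entire conjugate term. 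This cancellation is exactly why the shift magnitudes on $\nu$ and $\mu$ must be coupled by the factor $\tfrac{1}{1-\gamma}$, and verifying this coefficient match is the only point that requires a moment's care.

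Next I would evaluate the two remaining terms. The initial-state term becomes $(1-\gamma)\E_{s_0}[\nu(s_0)+\tfrac{C}{1-\gamma}] = (1-\gamma)\E_{s_0}[\nu(s_0)] + C$, contributing an additive $+C$. The log-sum-exp term becomes $\log\sum_s\exp(-\mu(s)-C) = -C + \log\sum_s\exp(-\mu(s))$, contributing an additive $-C$, since the constant factors out of the sum inside the logarithm. Adding the three contributions, the $+C$ from the first term and the $-C$ from the third term cancel while the middle term is untouched, yielding $\wt L(\nu + \tfrac{C}{1-\gamma}, \mu + C) = \wt L(\nu, \mu)$, as claimed.

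There is no substantive obstacle here — the argument is a short mechanical computation — the only thing to be attentive to is the algebra in the $e_{\nu,\mu}$ term, namely that the $\gamma$-weighted constant arising from $\gamma\E_{s'}[\nu(s')]$ exactly offsets the unweighted constants from $\mu(s)$ and $-\nu(s)$. This lemma is then presumably invoked in the proof of Theorem~\ref{thm:L_L_tilde_relationship} to identify the constant $C$ that relates a minimizer of $\wt L$ to a minimizer of $L$.
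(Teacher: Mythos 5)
Your proof is correct and matches the paper's argument exactly: both perform a direct substitution into the three terms of $\wt L$, observe that the constants $C + \tfrac{\gamma C}{1-\gamma} - \tfrac{C}{1-\gamma} = 0$ cancel inside $e_{\nu,\mu}$, and note that the $+C$ from the initial-state term offsets the $-C$ pulled out of the log-sum-exp term. Nothing is missing.
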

\begin{align}
    &\wt L(\nu + \tfrac{C}{1 - \gamma}, \mu + C)\\
    &= (1 - \gamma) \E_{s_0 \sim p_0}[\nu(s_0)  + \tfrac{C}{1 - \gamma}] + \log \sum \exp ( -\mu(s) - C ) \\
    &\hspace{12pt} + \E_{(s,a) \sim d^D} \Big[ \alpha f_+^* \Big( \tfrac{1}{\alpha} \big(\mu(s) + \cancel{C}  + \gamma  \E_{s'}[\nu(s')] + \cancel{\tfrac{\gamma C}{1 - \gamma}} -\nu(s) - \cancel{\tfrac{C}{1 - \gamma}} \big) \Big) \Big] \nonumber \\
    &= (1 - \gamma) \E_{s_0 \sim p_0}[\nu(s_0)] + \cancel{C} + \E_{(s,a) \sim d^D} \Big[ \alpha f_+^* \Big( \tfrac{1}{\alpha} e_{\nu,\mu}(s,a) \Big) \Big] + \log \sum \exp ( -\mu(s) ) - \cancel{C} \\
    &=(1 - \gamma) \E_{s_0 \sim p_0}[\nu(s_0)] + \E_{(s,a) \sim d^D} \Big[ \alpha f_+^* \Big( \tfrac{1}{\alpha} e_{\nu,\mu}(s,a) \Big) \Big] + \log \sum \exp ( -\mu(s) ) \\
    &= \wt L(\nu, \mu)
\end{align}

\begin{lemma}
    For any functions $\nu$ and $\mu$, $L(\nu, \mu) \ge \wt L(\nu, \mu)$ holds. The equality holds if and only if
    \begin{align}
        \sum_{s} \exp( -\mu(s) - 1) = 1.
    \end{align}
    \label{lemma:L_tilde_lower_bound}
\end{lemma}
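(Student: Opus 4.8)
The plan is to notice that $L(\nu,\mu)$ and $\wt L(\nu,\mu)$ have \emph{identical} first two terms — the initial-state term $(1-\gamma)\E_{p_0}[\nu(s_0)]$ and the $f$-divergence term $\E_{d^D}[\alpha f_+^*(\tfrac{1}{\alpha} e_{\nu,\mu}(s,a))]$ — so the entire comparison reduces to the two final terms. Writing $Z := \sum_s \exp(-\mu(s)) > 0$, we have $\sum_s \exp(-\mu(s)-1) = e^{-1} Z$, while the corresponding term in $\wt L$ is $\log Z$. Hence $L(\nu,\mu) - \wt L(\nu,\mu) = e^{-1} Z - \log Z$, and it suffices to prove $e^{-1} Z \ge \log Z$ for every $Z > 0$, with equality precisely when $e^{-1} Z = 1$.

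For this I would invoke the elementary inequality $x \ge 1 + \log x$, valid for all $x > 0$ with equality iff $x = 1$ (a one-line consequence of the concavity of $\log$, or of a derivative check on $x - 1 - \log x$). Applying it with $x = e^{-1} Z$ gives
\[
    e^{-1} Z \;\ge\; 1 + \log(e^{-1} Z) \;=\; 1 + \log Z - 1 \;=\; \log Z,
\]
which is exactly the desired bound; moreover equality holds iff $e^{-1} Z = 1$, i.e. iff $\sum_s \exp(-\mu(s)-1) = 1$. Combining with the cancellation of the shared terms yields both $L(\nu,\mu) \ge \wt L(\nu,\mu)$ and the stated equality condition.

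There is essentially no hard step here: after observing the cancellation, everything is a one-variable convexity estimate. The only mild technical point is to ensure $Z = \sum_s \exp(-\mu(s))$ is finite and strictly positive so that $\log Z$ is well defined; this is immediate in the finite-state setting and is in any case implicitly assumed throughout the derivation, since $\log \sum_s \exp(-\mu(s))$ already appears in the definition of $\wt L$. I would simply record this as a standing assumption rather than dwell on it.
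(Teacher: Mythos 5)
Your proof is correct and is essentially identical to the paper's: both reduce the comparison to the last terms and apply the elementary inequality $x \ge 1 + \log x$ (equivalently $x - 1 \ge \log x$) with $x = \sum_s \exp(-\mu(s)-1)$, obtaining the same equality condition. The only cosmetic difference is your explicit introduction of the notation $Z = \sum_s \exp(-\mu(s))$.
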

\begin{proof}
    For any $x \ge 0$, the inequality $x - 1 \ge \log x$ always holds, where the equality holds if and only if $x = 1$.
    By applying this equality, we have:
    \begin{align}
        &\sum_{s} \exp(-\mu(s) - 1) - 1 \ge \log \sum_{s} \exp(-\mu(s) - 1) \\
        \Leftrightarrow &\sum_{s} \exp(-\mu(s) - 1) ~ \cancel{-1} \ge \log \sum_{s} \exp(-\mu(s)) ~ \cancel{\exp(-1)} \\
        \Leftrightarrow &\sum_{s} \exp(-\mu(s) - 1) \ge \log \sum_{s} \exp(-\mu(s)) \\
        \Leftrightarrow &L(\nu, \mu) \ge \wt L(\nu, \mu)
    \end{align}
    where the equality holds if and only if $\sum_{s} \exp(-\mu(s) - 1) = 1$.
\end{proof}

\begin{lemma}
    Let $(\wt \nu^*, \wt \mu^*) = \argmin_{\nu, \mu} \wt L(\nu, \mu)$. Then, there exists a constant $C$ such that $(\wt \nu^* + C, \wt \mu^* + \tfrac{C}{1 - \gamma})$ is an optimal solution of $\min_{\nu, \mu} L(\nu, \mu)$.
\end{lemma}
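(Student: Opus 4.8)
The plan is to derive this from the two lemmas immediately preceding it: the shift-invariance of $\wt L$ (Lemma~\ref{lemma:nu_mu_constant_shift}) gives us a one-parameter family of $\wt L$-minimizers to move along, while the pointwise inequality $L \ge \wt L$ together with its equality condition (Lemma~\ref{lemma:L_tilde_lower_bound}) pins down the unique member of that family which also minimizes $L$. Concretely, starting from the assumed minimizer $(\wt \nu^*, \wt \mu^*)$ of $\wt L$, I would consider the family $(\nu_C, \mu_C) := (\wt \nu^* + \tfrac{C}{1-\gamma},\, \wt \mu^* + C)$ for $C \in \R$. By Lemma~\ref{lemma:nu_mu_constant_shift}, $\wt L(\nu_C, \mu_C) = \wt L(\wt\nu^*, \wt\mu^*) = \min_{\nu,\mu} \wt L(\nu,\mu)$ for every $C$, so every member of this family is again an $\wt L$-minimizer.

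Next I would choose $C$ so that the equality case in Lemma~\ref{lemma:L_tilde_lower_bound} holds at $(\nu_C, \mu_C)$. Since $\sum_s \exp(-\mu_C(s) - 1) = e^{-C}\sum_s \exp(-\wt\mu^*(s)-1)$, and $Z := \sum_s \exp(-\wt\mu^*(s)-1)$ is a finite positive number, the choice $C = \log Z$ yields $\sum_s \exp(-\mu_C(s) - 1) = 1$. By Lemma~\ref{lemma:L_tilde_lower_bound} this forces $L(\nu_C, \mu_C) = \wt L(\nu_C, \mu_C)$, which by the previous paragraph equals $\min_{\nu,\mu}\wt L(\nu,\mu)$.

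Finally I would close the argument by a sandwich. For every $(\nu,\mu)$, Lemma~\ref{lemma:L_tilde_lower_bound} gives $L(\nu,\mu) \ge \wt L(\nu,\mu) \ge \min_{\nu,\mu}\wt L(\nu,\mu)$, hence $\inf_{\nu,\mu} L(\nu,\mu) \ge \min_{\nu,\mu}\wt L(\nu,\mu)$. On the other hand $L(\nu_C,\mu_C) = \min_{\nu,\mu}\wt L(\nu,\mu)$, so $\inf_{\nu,\mu} L(\nu,\mu) \le \min_{\nu,\mu}\wt L(\nu,\mu)$. Therefore $\min_{\nu,\mu} L(\nu,\mu) = \min_{\nu,\mu}\wt L(\nu,\mu)$, and it is attained at $(\nu_C, \mu_C)$, which is precisely the asserted constant shift of $(\wt\nu^*, \wt\mu^*)$. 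As a byproduct this also establishes $L(\nu^*,\mu^*) = \wt L(\wt\nu^*,\wt\mu^*)$ and the constant-shift relation claimed in Theorem~\ref{thm:L_L_tilde_relationship}.

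I do not expect a real obstacle; the proof is essentially a two-lemma sandwich. The only points that need care are: (i) the shift must be taken along the specific direction $(\tfrac{C}{1-\gamma},\, C)$ under which $\wt L$ is invariant (an arbitrary shift would change both $\wt L$ and the normalization $\sum_s \exp(-\mu(s)-1)$, so the bookkeeping of which component receives $C$ and which receives $\tfrac{C}{1-\gamma}$ matters), and (ii) one must ensure $Z>0$ is finite so that $C=\log Z$ is well defined — immediate for finite $S$, and requiring the usual integrability assumption in the continuous-state case, consistently with the rest of the derivation.
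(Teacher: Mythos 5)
Your proposal is correct and follows essentially the same route as the paper: both pick $C=\log\sum_s\exp(-\wt\mu^*(s)-1)$ so that the shifted pair satisfies the equality condition of Lemma~\ref{lemma:L_tilde_lower_bound}, then combine shift-invariance of $\wt L$ with the bound $L\ge\wt L$ to sandwich the minimum. Your remark (i) is well taken --- the shift must go as $\mu\mapsto\mu+C$, $\nu\mapsto\nu+\tfrac{C}{1-\gamma}$, which is what both your argument and the paper's proof actually use (the lemma statement itself has these two roles transposed).
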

\begin{proof}
    Let $C^* = \log \sum_{s} \exp ( -\wt \mu ^* (s) - 1)$ be a constant. Also, let $\bar \nu^* = \wt \nu^* + \tfrac{C^*}{1 - \gamma}$ and $\bar \mu^* = \wt \mu^* + C^*$. Then,
    \begin{align}
        \sum_{s} \exp \big( \bar \mu^*(s) - 1 \big) 
        &=\sum_{s} \exp \big(-(\wt \mu^*(s) +  C^*) - 1 \big) \nonumber \\
        &= \sum_{s} \exp \Big( -\wt \mu^*(s)  - 1 - \log \sum_{s'} \exp ( -\wt \mu^*(s') - 1)  \Big)  \nonumber \\
        &= \frac{\sum_{s} \exp ( -\wt \mu^*(s) - 1  )}{ \sum_{s'} \exp ( -\wt \mu^*(s') - 1  )} = 1.
        \label{eq:sum_mu_bar_1}
    \end{align}
    Then, we have
    \begin{align}
        L(\bar \nu^*, \bar \mu^*) &= \wt L(\bar \nu^*, \bar \mu^*) & \text{(by \eqref{eq:sum_mu_bar_1} and Lemma~\ref{lemma:L_tilde_lower_bound})} \\
        &= \wt L \big(\wt \nu^*, \wt \mu^*\big) & \text{(by Lemma~\ref{lemma:nu_mu_constant_shift})} \\
        &= \min_{\nu, \mu} \wt L(\nu, \mu) \\
        &\le \min_{\nu, \mu} L (\nu, \mu) &\text{(by Lemma~\ref{lemma:L_tilde_lower_bound})}
    \end{align}
    which implies that $(\bar \nu^*, \bar \mu^*)$ is the optimal solution of $\min_{\nu, \mu} L (\nu, \mu)$.
\end{proof}

 \section{f-divergence}

In the experiments, we use the softened version of chi-square divergence for $f$:
\begin{align*}
    f_{\mathrm{soft}\text{-}\chi^2}(x) :=
	\begin{cases}
		x\log x-x+1	&\text{if } 0<x<1\\
		\frac{1}{2} (x - 1)^2 &\text{if }x\ge1.
	\end{cases} 
	~~~ \Rightarrow ~~~
	(f_{\mathrm{soft}\text{-}\chi^2}(x)')^{-1}(x) =
	\begin{cases}
	    \exp(x) & \text{if } x < 0 \\
	    x + 1 & \text{if } x \ge 0
	\end{cases}
\end{align*}
When $f = f_{\mathrm{soft}\text{-}\chi^2}$, its corresponding $f_+^*(y) = \max_{x \ge 0} xy - f(x)$ is given by:
\begin{align}
    f_+(x) &= \begin{cases}
        \infty & \text{if } x \le 0 \\
        x \log x - x + 1 & \text{if } 0 < x < 1 \\
        \tfrac{1}{2} (x - 1)^2 & \text{if } x \ge 1
    \end{cases}
    ~~~ \Rightarrow ~~~
    f_+^*(y) = \begin{cases}
        \exp(y) - 1 & \text{if } y < 0 \\
        \tfrac{1}{2} y^2 + y & \text{if } y \ge 0
    \end{cases}
\end{align}
where $f_+^*$ is the Fenchel conjugate of $f_+$.



\section{Hyperparameters}
\label{hyperparam}
Baseline hyperparameters are taken from URLB \citep{laskin2021urlb}, except for using the smaller hidden sizes $1000 \rightarrow 256$ for fast training/evaluation.
\begin{table*}[h]
    \centering
    \label{tab:hyperparameters}
    \begin{tabular}{l l l}
        \hline
        \textbf{SEMDICE pretraining hyper-parameter} & \textbf{Value} \\
        \hline
        Action repeat &1 \\
        $\alpha$ &0.5 \\
        Batch size &1024 \\
        $f$-type &softchiq \\ 
        Hidden dimension & 256 \\
        Learning rate & 0.0001 \\
        nsteps &1 \\
        Update every step & 2\\
        \hline
    \end{tabular}
    \caption{Hyperparameter Settings}
\end{table*}

For the policy network, we use a tanh-Gaussian distributions, following the baselines in URLB.

\section{Policy Extraction}
\label{appendix:policy_extraction}
Our practical SEMDICE optimizes \eqref{eq:min_objective_logsumexp_sample_final} that yields $\nu^*$ and $\mu^*$. However, $\nu^*$ itself is not a directly executable policy, so we should extract a policy from them. Note that the optimal SEM policy $\pi^*$ is encoded in $w^*$:
\begin{align}
    \frac{d^*(s,a)}{ d^D(s,a) } 
    &= (f')^{-1} \Big( \tfrac{1}{\alpha} \Big( \underbrace{\mu^*(s) + \gamma \E_{s'}[\nu^*(s')] - \nu^*(s)}_{= e_{\nu^*, \mu^*}(s,a) } \Big) \Big)_+ =: w^*(s,a)
    \tag{\ref{eq:w_definition}}
\end{align}
Then, we extract a policy from $w^*$ via I-projection policy extraction method~\citep{lee2021optidice}.
\begin{align}
    &\min_{\pi}~
	\mathbb{KL}
	\left(
	    d^D(s) \pi(a|s) || d^D(s) \pi^*(a|s)
	\right)\\
    &= \E_{\hspace{-1.6pt}\substack{s\sim d^D \\ a\sim\pi}}
	- \left[ \log w^*(s, a) - \Dkl(\pi(\bar{a}|s)||\pi_D(\bar{a}|s)) \right] +C \\
    &= \E_{\hspace{-1.6pt}\substack{s\sim d^D \\ a\sim\pi}}
	- \left[ \log  (f')^{-1} \Big( \tfrac{1}{\alpha} \Big( e_{\nu^*, \mu^*}(s,a) \Big) \Big)_+ 
    \cancel{- \Dkl(\pi(\bar{a}|s)||\pi_D(\bar{a}|s))} \right] + C \\
    &\approx \E_{\hspace{-1.6pt}\substack{s\sim d^D \\ a\sim\pi}}
	- \left[ \log  (f')^{-1} \Big( \tfrac{1}{\alpha} e(s,a) \Big)_+ \right] + C \label{eq:policy_objective}
\end{align}
where $C$ is some constraint and $\pi^D(a|s)$ is a data policy. As we perform online optimization, $\pi_D$ can be considered as an old policy, and we ignored the KL term in our practical implementation.
Finally, to enable $e_{\nu^*,\mu^*}$ to be evaluated every action $a$, we train an additional parametric function (implemented as an MLP that takes $(s,a)$ as an input and outputs a scalar value) by minimizing the mean squared error:
\begin{align}
    \min_{e} \E_{(s,a,s') \sim d^D} \Big[ \big( e(s,a) - \hat e_{\nu^*,\mu^*}(s,a,s') \big)^2 \Big] \label{eq:e_objective}
\end{align}
In the following section, we present the pseudo-code for practical SEMDICE.

\clearpage
\section{Pseudo-code for SEMDICE}
\label{appendix:pseudocode}
To sum up, SEMDICE computes a SEM policy by optimizing $(\nu^*, \mu^*)$, which corresponds to obtaining a stationary distribution correction ratios of the optimal SEM policy. Then, we extract a policy by training a $e$-network and performing I-projection as described in the previous section.

We assume $\nu, \mu, e, \pi$ are parameterized by $\theta, \omega, \psi, \phi$ respectively. Then, we optimize the parameters via stochastic gradient descent. The loss functions are summarized in the following:
\begin{align}
    J(\nu_\theta, \mu_\omega) := 
    &\textstyle (1 - \gamma) \E_{s_0} \big[  \nu_\theta(s_0) \big] + \E_{(s,a,s') \sim d^D} \Big[ \alpha f_+^* \Big( \tfrac{1}{\alpha} \hat e_{\nu_\theta,\mu_\omega}(s,a, s') \Big) \Big] \nonumber \\
    &+ \log \E_{s\sim \bar d^D(s)} \Big[ \exp(-\mu_\omega(s) - \log \bar d^D(s)) \Big] \label{eq:J_numu}
    \\
    J(e_\psi) :=&\textstyle \E_{(s,a,s') \sim d^D} \Big[ \big( e_\psi(s,a) - \hat e_{\nu_\theta,\mu_\omega}(s,a,s') \big)^2 \Big] ~~~~\text{(by \eqref{eq:e_objective})} \label{eq:J_e} \\
    J(\pi_\phi) := &\textstyle \E_{\hspace{-1.6pt}\substack{s\sim d^D \\ a\sim\pi_\phi}}
	- \left[ \log  (f')^{-1} \Big( \tfrac{1}{\alpha} e_\psi(s,a) \Big)_+ \right] ~~~~~ (\text{by 
 \ref{eq:policy_objective}}) \label{eq:J_pi}
\end{align}
where $\hat e_{\nu_\theta,\mu_\omega}(s,a,s') = \mu_\omega(s) + \gamma \nu_\theta(s') - \nu_\theta(s)$. For $- \log \bar d^D(s)$ in \eqref{eq:J_numu}, we used particle-based density estimation using $k$-nearnest-neighbor: $- \log \bar d^D(s_i) \approx \log \big( \lVert s_i - s_i^{k\textrm{-NN}} \rVert_2 \big)$.\footnote{
Though our practical also uses (approximate) estimation of $-\log d^D(s)$ via kNN, it does not contradict the main motivation of the paper. In \eqref{eq:J_numu}, we use Monte Carlo integration to approximate $\log \sum \exp (-\mu(s))$ as $\log \hat{\mathbb{E}}_{d^D} [ \exp( -\mu(s) - \log d^D(s) ) ]$. While this approximation may introduce bias, it still aims to maximize the entropy of the target policy's state stationary distribution from an arbitrary off-policy dataset, rather than maximizing the state entropy of the replay buffer.

In contrast, existing methods focus on maximizing the intrinsic reward $\hat r(s) \approx - \log d^D(s)$, which corresponds to maximizing the state entropy of the replay buffer. That being said, they are approximating $-\log d^\pi(s)$ by $- \log d^D(s)$. As a result, existing methods optimize the biased objective even when the kNN approximation error vanishes, whereas SEMDICE optimizes the unbiased objective once the kNN approximation error vanishes. See also Appendix~\ref{appendix:tabular_semdice_random}, which shows that SEMDICE can compute the optimal SEM policy even with a dataset collected by a uniform random policy. This behavior of computing an optimal SEM policy from random datasets cannot be achieved by other baselines.}
Instead of fully optimizing $\nu_\theta$ and $\mu_\omega$ until convergence, we alternatively perform single gradient updates for $\nu_\theta$, $\mu_\omega$, $e_\psi$, and $\pi_\phi$.
The pseudocode of SEMDICE is presented in Algorithm~\ref{alg:semdice}.

\begin{algorithm}[h!]
\caption{SEMDICE}
\textbf{Input:} Neural networks $\nu_\theta$, $\mu_\omega$, and $e_\psi$ with parameters $\theta$, $\omega$, and $\psi$, policy network $\pi_\phi$ with parameter $\phi$, replay buffer $D$, a learning rate $\eta$, a regularization hyperparameter $\alpha$

\begin{algorithmic}[1]
\FOR{each timestep $t$}
    \STATE Sample an action $a_t \sim \pi_\phi(s_t)$ for the current state $s_t$. \\ 
    \STATE Observe next state $s_{t+1} \sim P(\cdot|s_t, a_t)$ by taking action to the environment.
    \STATE Add transition to replay buffer $D \leftarrow D \cup (s_t, a_t, s_{t+1})$
    \STATE Sample a minibatch from $D$ for the following SGD updates.
    \STATE Perform SGD updates: 
     \begin{align*}
     (\theta, \omega) &\leftarrow (\theta, \omega) - \eta \nabla_{\theta, \omega} J(\nu_\theta, \mu_\omega) ~~ (Eq.~\eqref{eq:J_numu}) \\
     \psi &\leftarrow \psi - \eta \nabla_\psi J(e_\psi) ~~ (Eq.~\eqref{eq:J_e}) \\
     \phi &\leftarrow \phi - \eta \nabla_\phi J(\pi_\phi) ~~ (Eq.~\eqref{eq:J_pi})
     \end{align*}
     
    \ENDFOR
\end{algorithmic}

\label{alg:semdice}
\end{algorithm}

\clearpage
\section{Additional Experiments on Tabular MDPs - Comparison with MaxEnt~\citep{hazan2019provably}}
\begin{figure}[h!]
    \centering
    \includegraphics[width=0.9\linewidth]{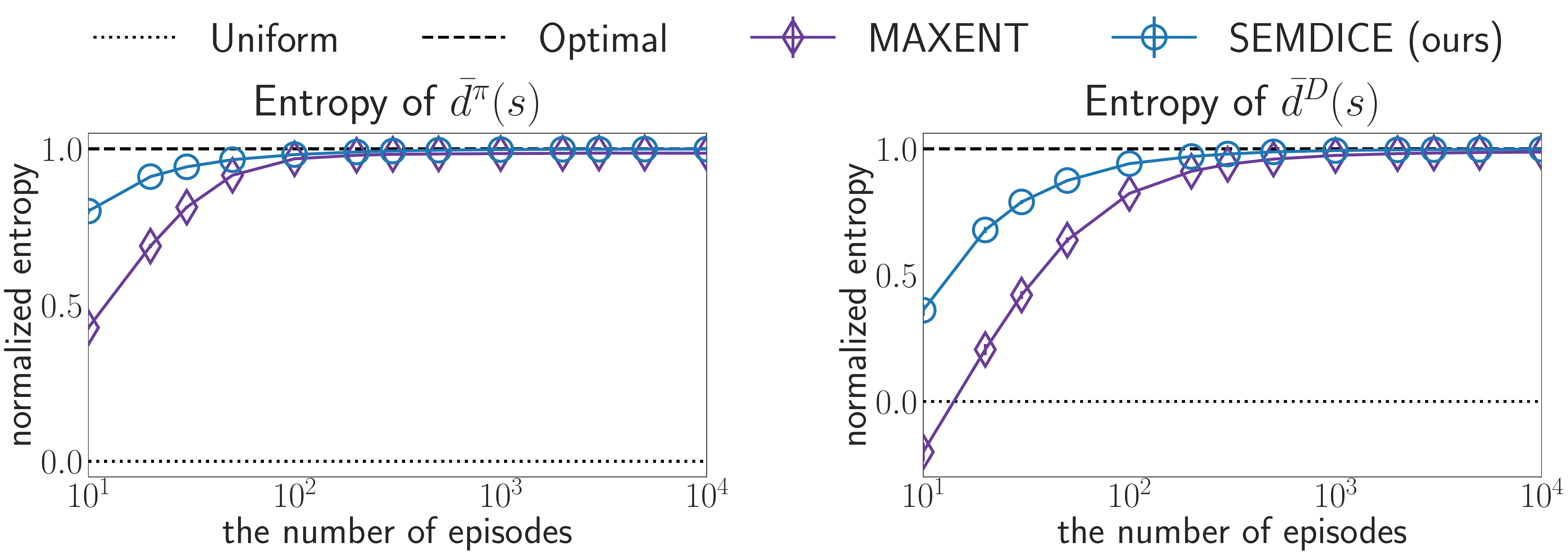}
    \caption{Performance of SEMDICE and MaxEnt~\citep{hazan2019provably} in randomly generated tabular MDPs. The first column indicates normalized policy entropy (0: state entropy of uniform policy, 1: state entropy of optimal SEM policy), the second column indicates normalized entropy of the cumulative experiences.}
    \label{fig:tabular_mdp_maxent}
\end{figure}
We did not compare SEMDICE with MaxEnt~\citep{hazan2019provably} in the main text because MaxEnt is not designed to compute a single policy but rather to obtain a set of policies. Specifically, it maximizes the entropy of the state distribution resulting from the weighted average of state distributions visited by multiple policies. This approach does not align with the scenario we consider, which focuses on fine-tuning a single policy derived from a pre-trained policy. Still, for quantitative comparison with MaxEnt, we have conducted additional experiments in tabular MDPs. In the result, MaxEnt (with tuned hyperparameters) underperforms SEMDICE in terms of learning efficiency. We also would like to emphasize that SEMDICE computes only a single stationary Markov policy, whereas MaxEnt continuously increases the number of policies it maintains, making it computationally inefficient.

\section{Additional Experiments on Tabular MDPs - SEMDICE using dataset collected by uniform policy}
\label{appendix:tabular_semdice_random}

\begin{figure}[h!]
    \centering
    \includegraphics[width=0.7\linewidth]{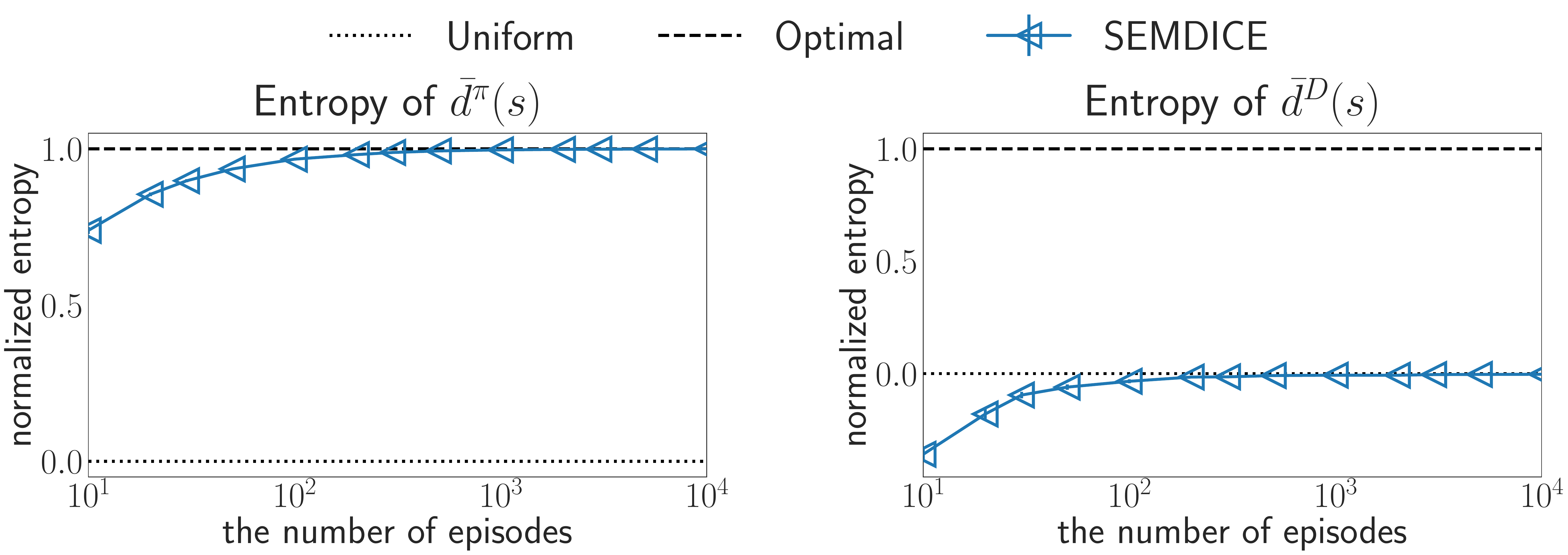}
    \caption{Performance of SEMDICE when the dataset is always being collected by uniform random policy in randomly generated tabular MDPs. The first column indicates normalized policy entropy (0: state entropy of uniform policy, 1: state entropy of optimal SEM policy), and the second column indicates normalized entropy of the cumulative experiences.
    This result demonstrates the capability of SEMDICE that can be optimized from arbitrary off-policy experiences.
    }
    \label{fig:tabular_semdice_random_data}
\end{figure}

\clearpage

\section{Additional Experiments on Tabular MDPs: Value-based Baselines}

In Figure~\ref{fig:tabular_mdp}, we presented the performance of the policy-based baselines: for each iteration, they compute policy gradients for the intrinsic reward $\hat r$ and perform policy gradient ascents.
In this section, we provide an additional result for value-based RL baselines.
The intrinsic rewards are defined in the same way as described in the main text.
\begin{align}
    \textbf{CB-SA}: \hat r(s,a) = \tfrac{1}{\sqrt{N(s,a)}},~~~
    \textbf{CB-S}: \hat r(s,a) = \tfrac{1}{\sqrt{N(s)}},~~~
    \textbf{PB-S}: \hat r(s,a) = - \log d^D(s)
\end{align}
where $N(s,a)$ is the cumulative $(s,a)$-visitation counts by the agent, $N(s)$ is the $s$-visitation counts by the agent, and $d^D(s)$ is the empirical state distribution of the cumulative experiences.
Then, for each value-based baselines, they maintain $Q$-table (initialized by zeros) and perform Q-learning updates by:
\begin{align}
    Q(s,a) \leftarrow Q(s,a) + \eta \Big( \hat r(s,a) + \gamma \max_{a'} Q(s',a') - Q(s,a) \Big)
\end{align}
where $\eta \in (0, 1)$ is a learning rate.

As value-based RL methods do not maintain explicit policy, we evaluated two types of target policy induced by $Q$: greedy (deterministic) policy ($\pi(s) = \argmax_{a} Q(s,a)$) and softmax policy $(\pi(a|s) \propto \exp(Q(s,a) / \tau))$.
For exploration, all methods adopt an $\epsilon$-soft behavior policy with $\epsilon \propto O(1/T)$ decreasing over time. We also decrease the temperature of softmax policy over time with the rate of $O(1/T)$.
The result for greedy policy is shown in Figure~\ref{fig:tabular_mdp_value_baselines_greedy}, and the result for the softmax policy is presented in Figure~\ref{fig:tabular_mdp_value_baselines_softmax}.

\begin{figure}[h!]
    \centering
    \includegraphics[width=\linewidth]{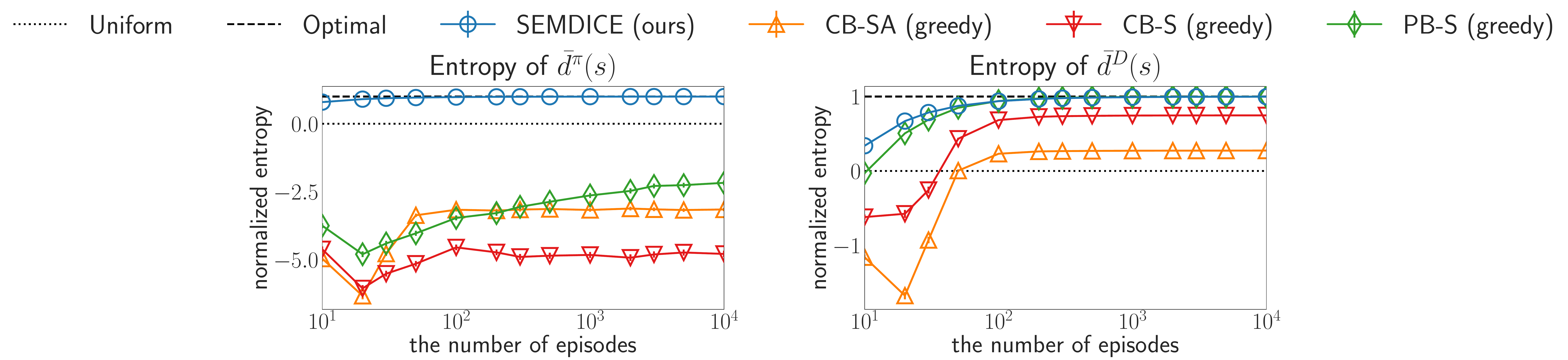}
    \vspace{-15pt}
    \caption{Performance of SEMDICE and value-based baselines in randomly generated tabular MDPs, where the target policy $\pi(s)$ of baseline is given by the \textbf{greedy} policy w.r.t. $Q(s,a)$.}
    \label{fig:tabular_mdp_value_baselines_greedy}
\end{figure}

\begin{figure}[h!]
    \centering
    \includegraphics[width=\linewidth]{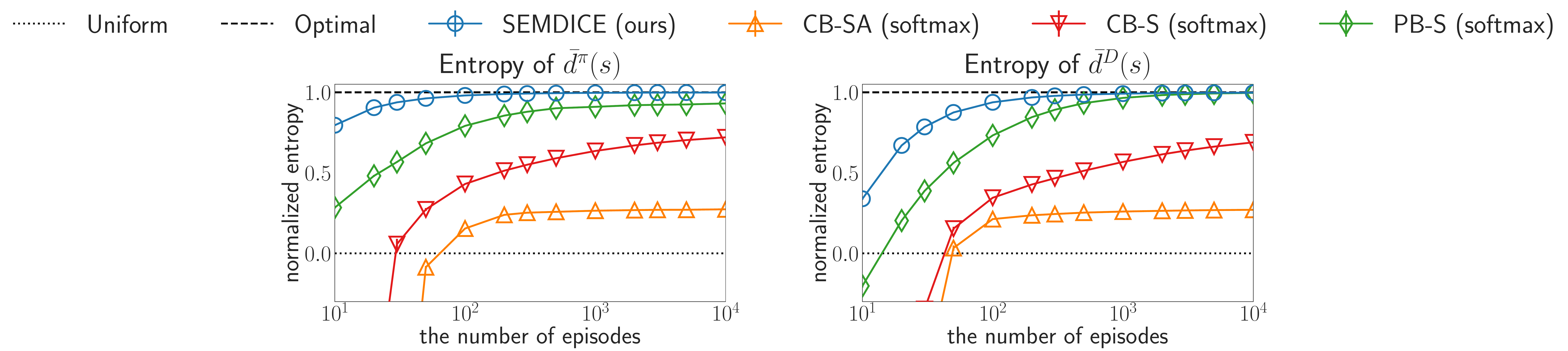}
    \vspace{-15pt}
    \caption{Performance of SEMDICE and value-based baselines in randomly generated tabular MDPs, where the target policy $\pi(a|s)$ of baseline is given by the \textbf{softmax} policy w.r.t. $Q(s,a)$.}
    \label{fig:tabular_mdp_value_baselines_softmax}
\end{figure}

As can be seen from the left column of Figure~\ref{fig:tabular_mdp_value_baselines_greedy}, the greedy target policy for the estimated $Q$ exhibits very low state entropy. This result is expected, as the optimal SEM policy should generally be \emph{stochastic} (Proposition~\ref{prop:sem_policy_property}), but the greedy target policy is \emph{deterministic}.
Thus, it does NOT converge to the optimal stochastic SEM policy, leading to significant suboptimality.
The entropy of the cumulative state-visit experiences $\H[\bar d^D(s)]$ (the right column of Figure~\ref{fig:tabular_mdp_value_baselines_greedy}) for PB-S approaches to the maximum state entropy, but it was achieved by mixture of many non-stationary determinstic policies. In contarst, SEMDICE yields a single stationary stochastic SEM policy.

In Figure~\ref{fig:tabular_mdp_value_baselines_softmax}, we can see that adopting the softmax (thus stochastic) policy leads to better state entropy for the target policy (left column).
However, there is no guarantee that any of these value-based baselines with the softmax policy will converge to an optimal SEM policy.

\clearpage

\section{Ablation experiment results for SEMDICE}
\label{appendix:ablation}

\begin{figure}[h!]
\centering
\includegraphics[width=0.8\textwidth]{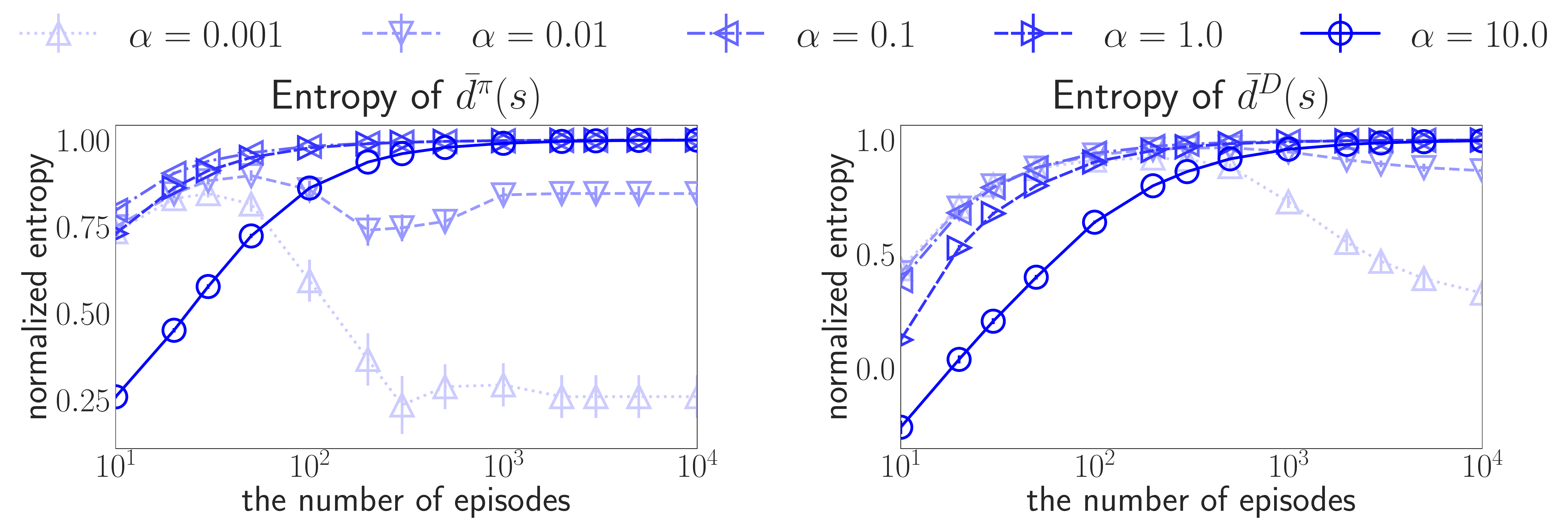}
\caption{Experimental results on varying $\alpha$ (SEMDICE without function approximation).}
\label{fig:ablation_alpha_finite}
\end{figure}

\begin{figure}[h!]
    \centering
    \includegraphics[width=0.8\textwidth]{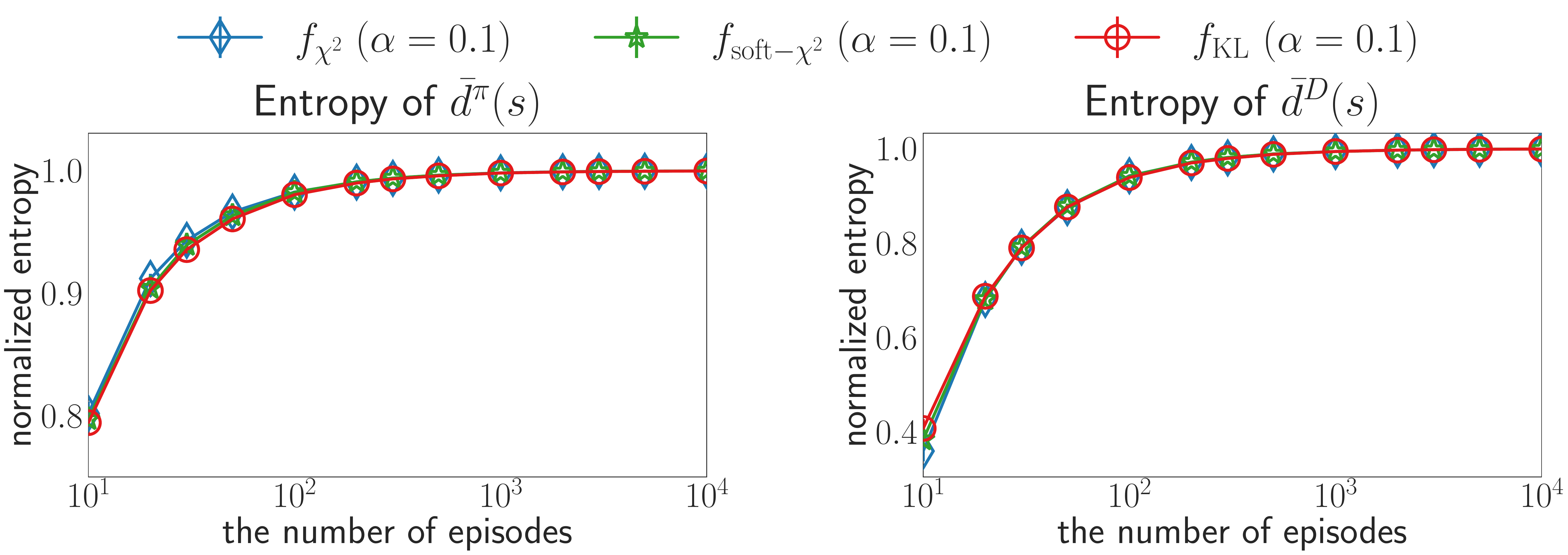}
    \caption{Experimental results on different $f$-divergence (SEMDICE without function approximation).}
    \label{fig:ablation_f_finite}
\end{figure}

\begin{figure}[h]
\vspace{-10pt}
    \centering
    \includegraphics[width=0.8\linewidth]{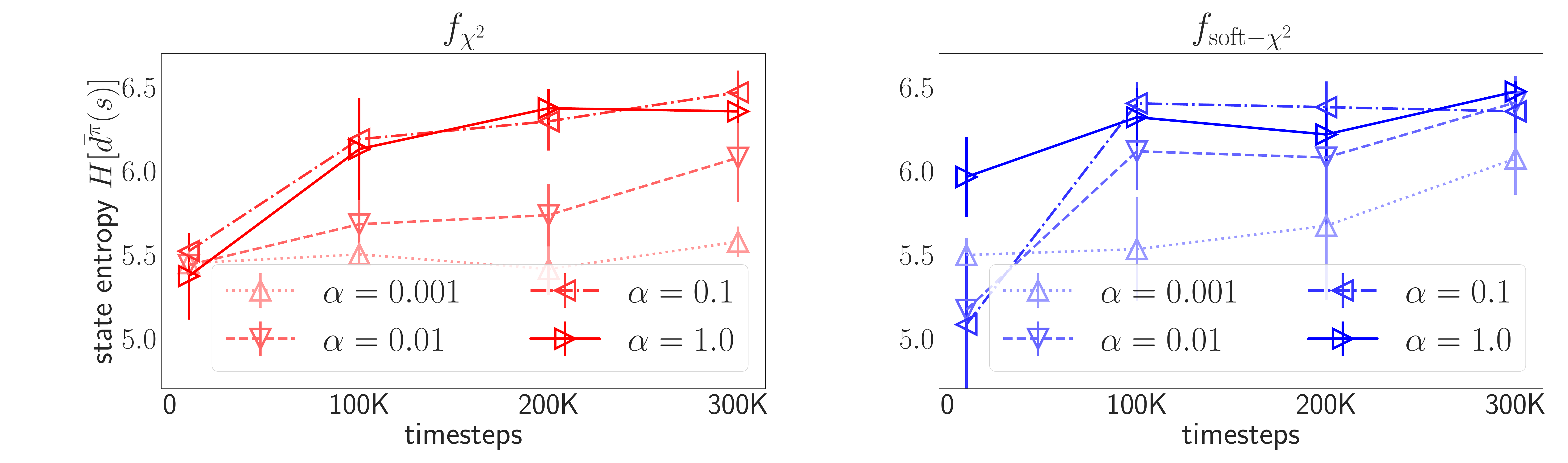}
    \vspace{-5pt}
    \caption{The effect of the varying $\alpha$ and $f$ on ContinuousMountainCar (SEMDICE with neural function approximation).}
    \label{fig:mcar}
\end{figure}

As shown in Figure~\ref{fig:ablation_alpha_finite}-\ref{fig:mcar} SEMDICE can become numerically unstable with very small values of $\alpha$, but it remains not too sensitive to $\alpha$ as long as it is within a reasonable range. Additionally, using different $f$ didn’t make significant difference.

\clearpage
\section{\texorpdfstring{Generalization to undiscounted ($\gamma=1$) setting}{Generalization to undiscounted (gamma=1) setting}}
\label{appendix:undiscounted}

SEMDICE can be generalized to deal with $\gamma=1$ setting, where $\bar d^\pi(s) := \lim_{T \rightarrow \infty} \frac{1}{T+1} \sum_{t=0}^T \Pr(s_t = s ; \pi)$. To this end, the original optimization problem (3-5) in the paper should be modified by adding an additional normalization constraint $\sum_{s,a} d(s,a) = 1$. Then, for any $\gamma \in (0, 1]$,
{\small \begin{align}
    &\max_{d, \bar{d} \ge 0} \textstyle -\sum\limits_{s} \bar{d}(s)\log \bar{d}(s) - \alpha \Df \left(d(s,a) || d^D(s,a)\right) \label{eq:gamma_one_cp_objective} \\
    &\textstyle \text{s.t. } \sum\limits_{a'} d(s', a') = (1 - \gamma)p_0(s') + \gamma \sum\limits_{s, a} d(s, a) T(s' | s, a) ~~ \forall{s'} \label{eq:gamma_one_cp_bellman_flow_constraint} \\
    &\textstyle \hspace{13pt} \sum\limits_{a} d(s,a) = \bar{d}(s) ~~~ \forall{s} \label{eq:gamma_one_cp_marginalization_constraint} \\
    &\red{\textstyle \hspace{13pt} \sum\limits_{s,a} d(s,a) = 1} \label{eq:gamma_one_cp_sum_one_constraint}
\end{align}}
By taking the similar derivation steps of SEMDICE with the additional constraint \eqref{eq:gamma_one_cp_sum_one_constraint}, we can show that solving the following unconstrained convex optimization problem is equivalent to solving (\ref{eq:gamma_one_cp_objective}-\ref{eq:gamma_one_cp_sum_one_constraint}), which additionally includes a single scalar variable $\lambda$:
{\small \begin{align}
        &\min_{\nu, \mu, \red{\lambda}} \textstyle (1 - \gamma) \E_{p_0} [ \nu(s_0) ] + \E_{d^D} \Big[ \alpha f_+^* \big( \tfrac{1}{\alpha} e_{\nu,\mu,\red{\lambda}}(s,a) \big) \Big] + \sum_{s} \exp(-\mu(s) - 1) - \red{\lambda}
\end{align}}%
where $e_{\nu, \mu, {\lambda}}(s,a) := {\lambda} + \mu(s) + \gamma \E_{s'}[\nu(s')] - \nu(s)$ and $\lambda$ is Lagrange multiplier for the normalization constraint \eqref{eq:gamma_one_cp_sum_one_constraint}.
The detailed derivation will be included in the final version of the paper.

\section{Extension to undiscounted (non-stationary) finite-horizon setting}
\label{appendix:finite_horizon_setting}

SEMDICE can be extended to a finite horizon setting, where the goal is to maximize the entropy of the average state distribution $\bar d^\pi(s) := \frac{1}{T+1} \sum_{t=0}^T \Pr(s_t = s ; \pi)$. The following formulation results in a non-stationary (timestep-dependent) policy by $\pi_t(a|s) = \frac{d_t^*(s,a)}{\sum_{a'} d_t^*(s,a')}$.
{\small \begin{align}
    &\max_{d_t, \bar{d}_t \ge 0} \textstyle 
    - \sum_{t=0}^T \sum_{s} \bar{d}_t(s)\log \bar{d}_t(s) - \alpha \sum_{t=0}^T \Df \left(d_t(s,a) || d^D(s,a)\right) \label{eq:finite_objective} \\
    &\textstyle \text{s.t. } \sum_{a} d_0(s,a) = p_0(s) \\ ~~ \forall s
    &\textstyle \hspace{13pt} \sum_{a'} d_{t}(s', a') = \sum_{s, a} d_{t-1}(s, a) T(s' | s, a) ~~ \forall{s'}, t \in \{1, \ldots, T\} \label{eq:finite_bellman_flow_constraint} \\
    &\textstyle \hspace{13pt} \sum_{a} d_t(s,a) = \bar{d}_t(s) ~~~ \forall{s}, t \in \{0, \ldots, t\} \label{eq:finite_marginalization_constraint} 
\end{align}}
We can show that solving (\ref{eq:finite_objective}-\ref{eq:finite_marginalization_constraint}) is equivalent to solving the following unconstrained convex optimization problem, where the difference to the original objective function is that $\nu, \mu$ are timestep-dependent (i.e. $\nu_t, \mu_t$).
\begin{align}
    &\min_{\{\nu_t\}_{t=0}^T, \{\mu_t\}_{t=0}^T} \textstyle (1 - \gamma) \E_{p_0} [ \nu_{\red{0}}(s_0) ] + \red{\sum_{t=0}^T} \E_{d^D} \Big[ \alpha f_+^* \big( \tfrac{1}{\alpha} e_{\red{t}, \nu,\mu}(s,a) \big) \Big] + \red{\sum_{t=0}^T} \sum_{s} \exp(-\mu_{\red{t}}(s) - 1) \label{eq:min_objective2}
\end{align}
where $e_{t,\nu,\mu}(s,a) := \mu_t(s) + \E_{s'}[\nu_{t+1}(s')] - \nu_t(s)$, and $\nu_{T+1}(\cdot) := 0$.
The detailed derivation will be included in the final version of the paper.

\clearpage
\section{More discussions on SEMDICE's Objective}
Despite its inclusion of $f$-divergence regularization, SEMDICE is not fundamentally opposed to particle-based state-entropy estimation; rather, it addresses the same objective of maximizing state entropy, with the additional inclusion of $f$-divergence regularization to enhance the stability of the learning process. It is important to note that this regularizer is not intended to impose a strong constraint on the optimal policy, and its influence can be controlled by adjusting the hyperparameter $\alpha$.
Note that many successful standard reward-maximizing RL algorithms have made use of regularizations such as entropy \citep{haarnoja2018soft} and KL-divergence \citep{schulman2015trpo}, f-divergence \citep{nachum2019algaedicepolicygradientarbitrary}, Bregman-divergence \citep{geist2019theoryregularizedmarkovdecision}, and so on. Although these regularizations might appear contrary to pure reward maximization, their inclusion actually stabilizes the overall learning process, and improves the overall reward performance of the resulting policy. In a similar vein, our state-entropy maximization method employs f-divergence regularization to prevent abrupt distribution shift (analogous to TRPO), enhancing the robustness and stability of learning process. Additionally, from a technical perspective, f-divergence regularization makes the objective function strictly concave, guaranteeing the global optimality of any local optimum (i.e. ensuring the uniqueness of the optimal solution).

Also, SEMDICE's objective function indeed aims to maximize the state entropy, i.e. encouraging exploration of unvisited states more.
To see this more intuitively, consider the main objective function of OptiDICE~\citep{lee2021optidice}, a reward-maximizing RL algorithm: 
\begin{align}
    \min_{\nu} (1 - \gamma) \E_{s_0} \big[ \nu(s_0) \big] + \E_{(s,a,s') \sim d^D} \Big[ \alpha f_+^* \Big( \tfrac{1}{\alpha} \big( \textcolor{red}{r(s,a)} + \gamma \nu(s') - \nu(s) \big) \Big) \Big]
\end{align}
Then, the following is the main objective function of our SEMDICE, a state-entropy maximization method:
\begin{align}
\min_{\nu, \textcolor{red}{\mu}} (1 - \gamma) \E_{s_0} \big[ \nu(s_0) \big] + \E_{(s,a,s') \sim d^D} \Big[ \alpha f_+^* \Big( \tfrac{1}{\alpha} \big( \textcolor{red}{\mu(s)} + \gamma \nu(s') - \nu(s) \big) \Big) \Big] + \log \sum_{s} \exp(-\mu(s) )    
\end{align}
That being said, the for a fixed $\mu$, SEMDICE can be interpreted as OptiDICE with the reward function defined by $\mu(s)$. Of course, we are jointly optimizing the $\mu$ instead of using fixed $\mu$. Also, due to the second term of the objective function ($f_+^*$ is increasing function), $\mu(s)$ is pressured to decrease more in regions of high data density and less where the data density is low. Consequently, SEMDICE's resulting policy will tend to explore unvisited states more actively, which is well-aligned with the goal of state entropy maximization.

\section{Machine and Setup}
We run the experiments on machines with Titan XP GPUs. Pretraining took approximately 10 hrs and finetuning took around 30 minutes.

\end{document}